\def\ifundefined#1{\expandafter\ifx\csname#1\endcsname\relax}
\let\arXiv = 1
\newcommand{\myparagraph}[1]{\needspace{1\baselineskip}\medskip\noindent {\bf #1}}
\def\Tr{\mathsf{T}}
\def\eps{\varepsilon}
\def\SO3{\mathrm{SO(3)}}
\newtheorem{assumption}{\hspace{0pt}\bf Assumption \hspace{-0.15cm}}
\newtheorem{lemma}{\hspace{0pt}\bf Lemma}
\newtheorem{theorem}{\hspace{0pt}\bf Theorem}
\newtheorem{remark}{\hspace{0pt}\bf Remark}
\newtheorem{definition}{\hspace{0pt}\bf Definition}
\renewcommand{\blue}{}
\begin{document}

\title{Wide and Deep Graph Neural Network with Distributed Online Learning}

\author{Zhan~Gao,~%\IEEEmembership{Student~Member,~IEEE,}
    Fer\hspace{0.015cm}nando Gama,~%\IEEEmembership{Student~Member,~IEEE,}
    and~Alejandro~Ribeiro\vspace{-6mm}%,~\IEEEmembership{Member,~IEEE}% <-this % stops a space
    \thanks{This work is supported by NSF CCF 1717120, ARO W911NF1710438, ARL DCIST CRA W911NF-17-2-0181, ISTC-WAS and Intel DevCloud. Z. Gao and A. Ribeiro are with the Dept. of Electrical and Systems Eng., Univ. of Pennsylvania. F. Gama is with the Electrical Comput. Eng., Rice Univ.. Email: \{gaozhan,aribeiro\}@seas.upenn.edu, fgama@rice.edu. Part of the results in this paper appear in \cite{Gao2021-WideDeep}.
    }
}

% Headers:
\markboth{IEEE TRANSACTIONS ON SIGNAL PROCESSING (SUBMITTED)}%
{Wide and Deep Graph Neural Networks with Distributed Online Learning}

\maketitle

\begin{abstract}
Graph neural networks (GNNs) are naturally distributed architectures for learning representations from network data. This renders them suitable candidates for decentralized tasks. In these scenarios, the underlying graph often changes with time due to link failures or topology variations, creating a mismatch between the graphs on which GNNs were trained and the ones on which they are tested. Online learning can be leveraged to retrain GNNs at testing time to overcome this issue. However, most online algorithms are centralized and usually offer guarantees only on convex problems, which GNNs rarely lead to. This paper develops the Wide and Deep GNN (WD-GNN), a novel architecture that can be updated with distributed online learning mechanisms. The WD-GNN consists of two components: the wide part is a linear graph filter and the deep part is a nonlinear GNN. At training time, the joint wide and deep architecture learns nonlinear representations from data. At testing time, the wide, linear part is retrained, while the deep, nonlinear one remains fixed. This often leads to a convex formulation. We further propose a distributed online learning algorithm that can be implemented in a decentralized setting. We also show the stability of the WD-GNN to changes of the underlying graph and analyze the convergence of the proposed online learning procedure. Experiments on movie recommendation, source localization and robot swarm control corroborate theoretical findings and show the potential of the WD-GNN for distributed online learning.
% EDICS:
% 55. SIPG-LRNM Learning models and methods on graphs < NEG SIGNAL PROCESSING FOR NETWORKS AND GRAPHS
\end{abstract}

\ifundefined{arXiv}
    \begin{IEEEkeywords}
    Graph neural networks, distributed learning, online learning, stability analysis, convergence analysis.
    \end{IEEEkeywords}
    \IEEEpeerreviewmaketitle
\else
\fi

%%%%%%%%%%%%%%%%%%%%%%%%%%%%%%%%%%%%%%%%%%%%%%%%%%%%%%%%%%%%%%%%%%%%%%%%%%%%%%%%
%%%%                         SECTION : Introduction                         %%%%
%%%%%%%%%%%%%%%%%%%%%%%%%%%%%%%%%%%%%%%%%%%%%%%%%%%%%%%%%%%%%%%%%%%%%%%%%%%%%%%%

\section{Introduction} \label{sec:intro}

%!TEX root =  00-wideDeep.tex

%%%%%%%%%%%%%%%%%%%%%%%%%%%%%%%%%%%%%%%%%%%%%%%%%%%%%%%%%%%%%%%%%%%%%%%%%%%%%%%%
%%%%                                                                        %%%%
%%%%                         SECTION : Introduction                         %%%%
%%%%                                                                        %%%%
%%%%%%%%%%%%%%%%%%%%%%%%%%%%%%%%%%%%%%%%%%%%%%%%%%%%%%%%%%%%%%%%%%%%%%%%%%%%%%%%
%%%% sec:intro %%%%
%%%%%%%%%%%%%%%%%%%

Graph neural networks (GNNs) \cite{Scarselli2009, Ruiz2021-GNNs} are processing architectures that learn nonlinear representations from network data in a wide array of tasks ranging from abstract graphs, such as citation networks, recommendation systems and authorship attribution \cite{Gao2021-Training, Shchur2018-Pitfalls, Xu2019-How}, to physical graphs, including source localization, wireless communications, and robotics \cite{Gao2021-Variance, Gao2020-Resource, Gama2021-ControlGNN}. GNNs are capable of levaraging structural information present in graph signals to extract meaningful features. GNNs consist of a cascade of blocks, each one applying a graph convolution followed by a pointwise nonlinearity \cite{Bruna2014-SpectralGNN, Defferrard2016-ChebNets, Gama2019-Archit}. 

One of the key properties of GNNs is that they are local and distributed. They are local since they only require information from neighboring nodes and distributed since each node can compute its own output without a centralized unit. While seminal for decentralized learning \cite{Gama2021-ControlGNN}, their performance relies heavily on the structure of the underlying graph inherent in data \cite{Gama2020-Stability, gao2021stability, Ruiz2020-Transferability}. Oftentimes, problems of interest exhibit changes to the data structure between training and testing phases or involve dynamic systems \cite{frahling2008sampling, helwa2017multi, Gama2019-LinearControl}. For example, in the case of robot swarm coordination, the graph is determined by the communication network between robots which is, in turn, determined by their physical proximity. If robots move, the communication links will change and the graph structure will change as well. In this context, we may need adapting to the new data structure to maintain performance. GNNs have been shown to be resilient to mild structure changes, referred to as permutation equivariance and perturbation stability \cite{Gama2020-Stability, Lerman2020-Scattering}. While these properties guarantee certain transference \cite{Ruiz2020-Transferability}, the performance gets degraded inevitably under substantial structure changes. \blue{The work in \cite{Gao2020-Stochastic} assumed the graph dynamics as the random edge sampling (RES) model and developed stochastic graph neural networks (SGNNs) that account for the latter during training to improve the architecture robustness during testing. However, the RES model cannot represent all graph dynamics and the SGNN is limiting in certain applications.} In this paper, we propose to mitigate this performance degradation by leveraging online learning approaches \cite{bottou2004large, anderson2008theory}.

Online learning is a well-established paradigm for time-varying optimization problems, which has been shown successful in the fields of machine learning and signal processing \cite{dabbagh2005online}. In particular, online learning focuses on optimization problems that change continuously across time. At each time instance, online algorithms tackle a new instantiated optimization problem and perform a series of updates on the solution obtained at a previous time instance. In a nutshell, these algorithms generate a sequence of approximate optimizers that track the true ones of the time-varying optimization problem.

In order to leverage online learning in GNNs, we face two major roadblocks. First, optimality bounds and convergence guarantees are given only for convex problems \cite{shalev2012online}. Second, current online algorithms assume a centralized approach. This is particularly problematic since it violates the local and distributed nature of GNNs which is a key property of learning in physical systems \cite{Gama2021-ControlGNN}. Online learning has been investigated in designing neural networks (NNs) for dynamically varying problems. Specifically, the works in \cite{Sanz2012, Li2004} develop online algorithms for fully-connected neural networks (multi-layer perceptrons; MLP) with applications in dynamical condition monitoring and aircraft control. More recently, the authors in \cite{Hong2015, Molchanov2016} applied online learning in convolutional neural networks (CNNs) for visual tracking, detection and classification. While these works adopt online algorithms in NNs, theoretical analysis on the convergence of these algorithms is not presented, except for \cite{Ho20102} that proves the convergence of certain online algorithms for radial neural networks only.

This paper puts forth a novel architecture called the Wide and Deep Graph Neural Network (WD-GNN). This architecture is tailored for distributed tasks and, fundamentally, adds the capability of distributed online retraining. The WD-GNN consists of two components, a wide part consisting of a linear graph filter, and a deep part consisting of a nonlinear GNN. Both parts are trained jointly in an offline phase. During the online execution, the WD-GNN allows for distributed re-learning with provable convergence guarantees. More in detail, our main contributions are as follows:

\begin{list}{}{ 
                 \setlength{\labelwidth}{20pt} 
                 \setlength{\leftmargin}{23pt} 
                 \setlength{\labelsep}{3pt} 
                 \setlength{\itemsep}{5pt}
                 \setlength{\topsep}{5pt}
                 \setlength{\parskip}{0pt}
              }

    \item[\textbf{(C1)}] \emph{The WD-GNN (Section \ref{sec:wideDeepGNN})}: We propose the WD-GNN consisting of two components, a wide part which is a linear graph filter and a deep part which is a nonlinear GNN. This combined architecture not only improves the representational capability but also inherits a distributed implementation for decentralized learning.
    
    \item[\textbf{(C2)}] \emph{Distributed online learning (Section \ref{sec:onlineLearning})}: We propose a learning procedure consisting of two phases: the offline training and the online execution and retraining. The offline phase trains the joint architecture to learn nonlinear representations from data. The online phase fixes the nonlinear deep part and only retrains the linear wide part, adapting to changing problem scenarios at testing time. We further propose a distributed online algorithm that successfully performs the online phase in a completely distributed manner.
    
    \item[\textbf{(C3)}] \emph{Stability analysis (Section \ref{sec:stability})}: We analyze the stability of the WD-GNN to perturbations on the underlying graph. We prove that the output difference of the WD-GNN induced by the graph perturbation is upper bounded proportionally to the perturbation size. The proportionality depends on the graph filter, the nonlinearity, and the architecture width and depth. This result indicates that the WD-GNN maintains performance under mild changes of the problem scenario. 
    
    \item[\textbf{(C4)}] \emph{Convergence analysis (Section \ref{sec:convergence})}: We provide convergence analysis for the proposed online learning procedure, which estimates the convergence rate and characterizes its dependence on the graph structure, algorithm parameters, problem setting and correlations between time-varying instances of the problem. The result validates the effectiveness of the distributed online algorithm and helps explain its observed success.
    
\end{list}

We perform numerical experiments on movie recommendation, source localization and robot swarm control to corroborate theoretical findings (Section \ref{sec:sims}), and finalize the paper by drawing conclusions (Section \ref{sec:conclusions}).

%%%%%%%%%%%%%%%%%%%%%%%%%%%%%%%%%%%%%%%%%%%%%%%%%%%%%%%%%%%%%%%%%%%%%%%%%%%%%%%%
%%%%             SECTION : Wide and Deep Graph Neural Networks              %%%%
%%%%%%%%%%%%%%%%%%%%%%%%%%%%%%%%%%%%%%%%%%%%%%%%%%%%%%%%%%%%%%%%%%%%%%%%%%%%%%%%

\section{Wide and Deep Graph Neural Networks} \label{sec:wideDeepGNN}

%!TEX root = 00-wideDeep.tex

%%%%%%%%%%%%%%%%%%%%%%%%%%%%%%%%%%%%%%%%%%%%%%%%%%%%%%%%%%%%%%%%%%%%%%%%%%%%%%%%
%%%%                                                                        %%%%
%%%%             SECTION : Wide and Deep Graph Neural Networks              %%%%
%%%%                                                                        %%%%
%%%%%%%%%%%%%%%%%%%%%%%%%%%%%%%%%%%%%%%%%%%%%%%%%%%%%%%%%%%%%%%%%%%%%%%%%%%%%%%%
%%%% sec:wideDeepGNN %%%%
%%%%%%%%%%%%%%%%%%%%%%%%%

Let $\ccalG = \{\ccalN, \ccalE, \ccalW\}$ be a graph with a set of $N$ nodes $\ccalN = \{n_{1},\ldots,n_{N}\}$, a set of edges $\ccalE \subseteq \ccalN \times \ccalN$ and an edge-weighing function $\ccalW:\ccalE \to \reals_{\geq 0}$. This graph is used to describe the network topology of the distributed system. For example, in the case of robot swarm coordination, each node $n_{i} \in \ccalN$ models a robot, each edge $(n_{i},n_{j}) \in \ccalE$ models the communication link between robots $n_{i}$ and $n_{j}$, and $\ccalW(n_{i},n_{j}) = w_{ij}$ summarizes the communication channel.

Graph signal processing (GSP) is a mathematical framework for handling distributed problems \cite{Ortega2018-GSP, Gama2019-GLLN, Gama2020-Sketching}. We define a graph signal as a mapping between the nodes and an $F$-dimensional vector space $\mathsf{X}: \ccalN \to \reals^{F}$. The collection of the mappings for all nodes can be conveniently described by a matrix $\bbX \in \reals^{N \times F}$ such that its $i$th row corresponds to $\mathsf{X}(n_{i}) = \bbx_{i} \in \reals^{F}$. We typically refer to each entry of the $F$-dimensional vector (i.e. each column of the matrix $\bbX$ when we consider all nodes) as a feature. The concept of graph signal is used to describe the data associated to the nodes of the graph. For instance, in the robot coordination problem, $\bbx_{i}$ is used to describe the state of robot $n_i$ (acceleration, velocity, relative position),  while $\bbX$ describes the state of the entire system.

Referring to the matrix $\bbX$ as a graph signal is mathematically convenient, but this description loses the connection between the signal and the graph that was originally present in the mapping $\mathsf{X}$. To recover this information, we introduce the concept of graph matrix description (GMD) which is a matrix $\bbS \in \reals^{N \times N}$ that respects the sparsity of the graph, that is $[\bbS]_{ij} = 0$ whenever $(n_{j},n_{i}) \notin \ccalE$ for $i \neq j$. Examples of GMD found in the literature include the adjacency matrix, the Laplacian matrix, and their normalized counterparts \cite{Ortega2018-GSP}. In the context of robot communications, the GMD can be used to represent channel information \cite{Eisen2020-WirelessEGNN}.

To relate the graph signal $\bbX$ to the underlying graph, the GMD $\bbS$ is used to define a linear operation between graph signals such that $\bbY = \bbS \bbX$ with $\bbY \in \reals^{N \times F}$. The $f$th feature value at node $n_{i}$ of the resulting graph signal $[\bbY]_{if}$ is computed as
% eqn:graphShift
\begin{equation} \label{eqn:graphShift}
    [\bbS \bbX]_{if} = \sum_{j=1}^{N} [\bbS]_{ij} [\bbX]_{jf} = \sum_{j : n_{j} \in \ccalN_{i} \cup \{n_{i}\}} [\bbS]_{ij} [\bbX]_{jf}
\end{equation}
where $\ccalN_{i} = \{n_{j} \in \ccalN : (n_{j},n_{i}) \in \ccalE\}$ corresponds to the set of nodes that are neighbors of node $n_{i}$. The first equality is the definition of a matrix multiplication, while the second equality holds due to the sparsity pattern of the GMD. Note that it is the sparsity pattern of the GMD that makes the operation in \eqref{eqn:graphShift} a distributed one. That is, in order to compute the output of $\bbS\bbX$ at node $n_{i}$, only local communications with one-hop neighbors $\ccalN_i$ are involved, and the result can be obtained separately at each node. The operation $\bbS \bbX$ is often referred to as a graph shift and the matrix $\bbS$ is therefore called a graph shift operator (GSO). This operation is at the core of GSP because it effectively relates the graph signal with the graph support, and serves as the basic building block for more complex operations between graph signals.

In general, we can think of graph data as given by a pair $(\bbX,\bbS)$ consisting of the graph signal $\bbX$ and its support $\bbS$. We note, however, that we usually regard $\bbX$ as the actionable variable while the support $\bbS$ is determined by the problem setting --this is typically the case for problems involving physical networks. Motivated by \cite{cheng2016wide}, we propose the Wide and Deep Graph Neural Network (WD-GNN) architecture to process graph signals. The WD-GNN is a nonlinear map $\bbPsi:\reals^{N \times F} \to \reals^{N \times G}$ consisting of two components, a wide part $\bbA(\bbX;\bbS,\ccalA)$ and a deep part $\bbPhi(\bbX;\bbS,\ccalB)$, as follows 
% eqn:WDGNN
\begin{equation} \label{eqn:WDGNN}
    \bbPsi(\bbX;\bbS, \ccalA, \ccalB) = \alpha_{\text{W}} \bbA(\bbX; \bbS, \ccalA) + \alpha_{\text{D}} \bbPhi(\bbX; \bbS, \ccalB) + \beta.
\end{equation}
The wide part $\bbA(\bbX;\bbS,\ccalA)$ is a linear graph filter, that we introduce in Section~\ref{subsec:graphFilter}, while the deep part $\bbPhi(\bbX;\bbS, \ccalA)$ is a nonlinear GNN, that we present in section \ref{subsec:GNN}. The scalars $\alpha_{\text{W}}$, $\alpha_{\text{D}}$, and $\beta$ are combination weights that can either be fixed by the user or learned from data. 

\subsection{Wide part: Linear graph filter} \label{subsec:graphFilter}

A graph filter $\bbA: \reals^{N \times F} \to \reals^{N \times G}$ is a linear mapping between signals that is built upon the graph shift operation \eqref{eqn:graphShift} as follows
% eqn:graphConv
\begin{equation} \label{eqn:graphConv}
    \bbA(\bbX; \bbS, \ccalA) = \sum_{k = 0}^{K} \bbS^{k} \bbX \bbA_{k}
\end{equation}
where the set $\ccalA = \{\bbA_{k} \in \reals^{F \times G} \ , \ k = 0,\ldots, K\}$ is the set of $K+1$ filter taps $\bbA_{k}$. Given the GMD $\bbS$, these filter taps completely characterize the graph filter. Note that a graph filter is capable of mapping signals with different feature dimension, i.e., $F$ need not be equal to $G$.

The graph filter is a linear and distributed operation in the input graph signal $\bbX$. To see this, recall that $\bbS \bbX$ is a distributed operation [cf. \eqref{eqn:graphShift}]; therefore, $\bbS^{k}\bbX = \bbS (\bbS^{k-1}\bbX)$ repeats the graph shift $k$ times, which implies that only $k$ exchanges with the one-hop neighbors are required to compute this. The multiplication by the filter tap $\bbA_{k}$ on the left does a linear combination of the feature values at each node separately, so it does not involve any communication between nodes. We note that while the graph filter is compactly written as in \eqref{eqn:graphConv}, in a distributed setting, it is not required to actually compute the powers of a matrix, but simply to be able to communicate with neighbors. As a matter of fact, the nodes need not even know the topology of the graph --see Fig.~\ref{fig.graphcon}. Finally, we observe that the graph filter in \eqref{eqn:graphConv} often receives the name of graph convolution due to its sum-and-shift nature. In the proposed WD-GNN architecture [cf. \eqref{eqn:WDGNN}], we adopt a graph filtering stage as the wide part, making it the linear and distributed component of the architecture.

\begin{figure*}[t]
    \begin{subfigure}{0.2\textwidth}
        \includegraphics [width=1.1\linewidth, height = 0.85\linewidth]
        {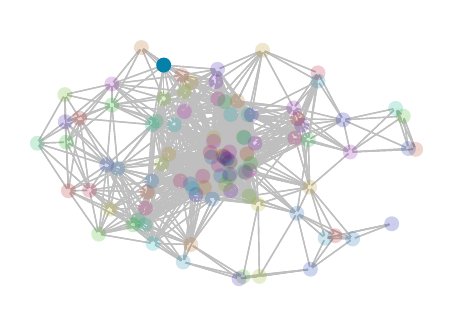}\qquad
        \caption{Underlying graph}%
        \label{subfiga_underlying_graph}%
    \end{subfigure}\hfill\hfill%
    \begin{subfigure}{0.2\textwidth}
        \includegraphics [width=1.1\linewidth, height = 0.85\linewidth]
        {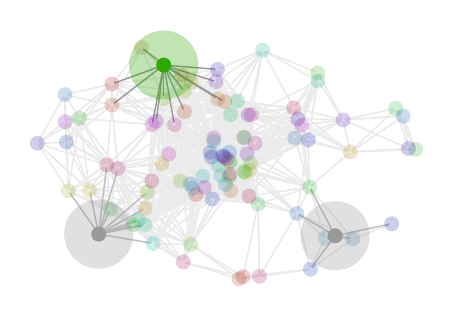}\qquad
        \caption{$1$-hop neighborhood}%
        \label{subfigb_one_hop}%
    \end{subfigure}\hfill\hfill%
    \begin{subfigure}{0.2\textwidth}
        \includegraphics [width=1.1\linewidth, height = 0.85\linewidth]
        {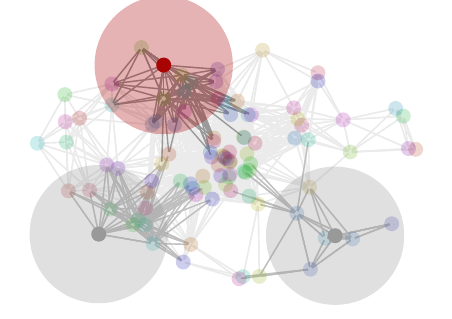}\qquad
        \caption{$2$-hop neighborhood}%
        \label{subfigc_two_hop}%
    \end{subfigure}\hfill\hfill%
    \begin{subfigure}{0.2\textwidth}
        \includegraphics [width=1.1\linewidth, height = 0.85\linewidth]
        {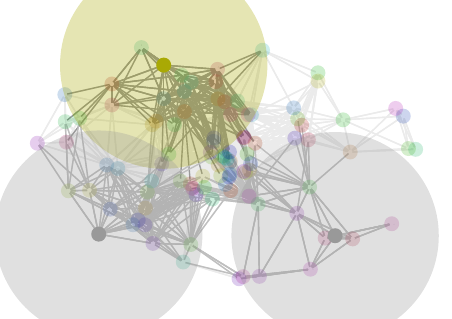} 
        \caption{$3$-hop neighborhood}%
        \label{subfigd_three_hop}%
    \end{subfigure}\\
    \begin{center}
    \tikzstyle {Phi} = [rectangle,
    thin,
    minimum width = 1.0*\unit,
    minimum height = \sumshift*\unit,
    anchor = west,
    draw,
    fill = blue!20]
    
    \tikzstyle {sum} = [circle,
    thin,
    minimum width  = 0.3*\unit,
    minimum height = 0.3*\unit,
    anchor = center,
    draw,
    fill = blue!20]

    \def \deltax {1.5}
    \def \deltay {0.8}
    \def \sumshift {0.4}
    
    \def \thisplotscale {1.75}
    \def \unit {\thisplotscale cm}
    
    \tikzstyle {Phi} = [rectangle,
    thin,
    minimum width = 0.5*\unit,
    minimum height = \sumshift*\unit,
    anchor = west,
    draw,
    fill = blue!20]
    
    \tikzstyle {sum} = [circle,
    thin,
    minimum width  = 0.3*\unit,
    minimum height = 0.3*\unit,
    anchor = center,
    draw,
    fill = blue!20]

    \def \deltax {1.6}
    \def \deltay {0.8}
    \def \sumshift {0.4}
    
    \begin{tikzpicture}[x = 1*\unit, y = 1*\unit]
    
    % Begin by drawing an empty node for initializing the chain
    \node (origin) [] {};
    \path (origin) ++ (0.2*\deltax, 0) node (first) [] {};
    
    % Draw the chain of graph shifts
    \path (first) ++ (1.1*\deltax, 0) node (0) [Phi] {$\bbS$};
    \path (0)     ++ (1.0*\deltax, 0) node (1) [Phi] {$\bbS$};
    \path (1)     ++ (1.0*\deltax, 0) node (2) [Phi] {$\bbS$};
    
    % End by drawing an empty node for finalizing the chain
    \path (2.east) ++ (0.7*\sumshift*\deltax, 0) node [anchor=west] (last) [] {};
    
    % Draw the chain of additions
    \path (first.east) ++ (1.5*\sumshift*\deltax, -\deltay) node (sum0) [sum] {$+$};
    \path (0.east) ++ (\sumshift*\deltax, -\deltay) node (sum1) [sum] {$+$};
    \path (1.east) ++ (\sumshift*\deltax, -\deltay) node (sum2) [sum] {$+$};
    \path (2.east) ++ (\sumshift*\deltax, -\deltay) node (sum3) [sum] {$+$};
    
    % Draw the arrows connecting the shift blocks
    \path[-stealth] (first) edge [very near start, above] node {$\bbX$}               (0);	
    \path[-stealth] (0)     edge [above] node {$\ \bbS\bbX$}     (1);	
    \path[-stealth] (1)     edge [above] node {$\ \bbS^{2}\bbX$} (2);	
    \path[-]        (2)     edge [above] node {$\ \bbS^{3}\bbX$} (sum3 |- last);
    
    % Draw the arrows connecting to the sum blocks
    \path[-stealth, draw] (sum0 |- first) -- (sum0) node [midway, right] {$\bbA_0$};	
    \path[-stealth, draw] (sum1 |- 0)     -- (sum1) node [midway, right] {$\bbA_1$};	
    \path[-stealth, draw] (sum2 |- 1)     -- (sum2) node [midway, right] {$\bbA_2$};	
    \path[-stealth, draw] (sum3 |- 2)     -- (sum3) node [midway, right] {$\bbA_3$};
    
    % Draw the arrows connecting the sum blocks
    \path[-stealth, draw] (sum0) -- (sum1);	
    \path[-stealth, draw] (sum1) -- (sum2);	
    \path[-stealth, draw] (sum2) -- (sum3);	
    
    % Draw the convolution output
    \path[-stealth] (sum3) edge [above] node
    {$\bbA(\bbX;\bbS,\ccalA)$} ++ (0.7*\deltax, 0);

    \end{tikzpicture}
    \end{center}
    \caption{\small The graph filter performs successive local node exchanges with neighbors, where the $k$-shifted signal $\bbS^k \bbX$ collects the information from $k$-hop neighborhood (shown by the increasing disks), and aggregates these shifted signals $\bbX, \ldots, \bbS^K \bbX$ with a set of parameters $[b^{11}_0, \ldots, b^{11}_K]^\top$ to generate the higher-level feature that accounts for the graph structure up to a neighborhood of radius $K$.\vspace{-0.5cm}}
    \label{fig.graphcon}
\end{figure*}

\subsection{Deep part: Nonlinear graph neural network} \label{subsec:GNN}

A graph convolutional neural network, or GNN for short, is a nonlinear mapping $\bbPhi:\reals^{N \times F} \to \reals^{N \times G}$ between graph signals. GNNs are built as a cascade of blocks (or layers) of graph filters \eqref{eqn:graphConv}, followed by pointwise nonlinearities
% eqn:GCNN
\begin{equation} \label{eqn:GCNN}
    \bbPhi(\bbX; \bbS, \ccalB) = \bbX_{L} \quad \text{with} \quad \bbX_{\ell} = \sigma \bigg( \sum_{k=0}^{K_\ell} \bbS^{k} \bbX_{\ell-1} \bbB_{\ell k} \bigg)
\end{equation}
for $\ell = 1,\ldots, L$, where $\sigma:\reals \to \reals$ a pointwise nonlinear function which, for ease of exposition, denotes its entrywise application in \eqref{eqn:GCNN}. The input to the GNN is placed at the first layer $\bbX_{0}=\bbX$ while the output is collected at the last layer $\bbX_{L} = \bbPhi(\bbX;\bbS,\ccalB)$. The GNN is characterized by the set of filter taps $\ccalB = \{ \bbB_{\ell k} \in \reals^{F_{\ell-1} \times F_{\ell}}, k = 0,\ldots, K_\ell, \ell=1,\ldots,L\}$ for each of the filters. Note that each layer maps a signal from an $F_{\ell-1}$-dimensional space into an $F_{\ell}$-dimensional one. The number of layers $L$, the number of output features at each layer $F_{\ell}$ and the number of filter taps at each layer $K_{\ell}$ are all design choices. 

The GNN \eqref{eqn:GCNN} is a nonlinear distributed operation. It is nonlinear because of the effect of the nonlinearity $\sigma$, and it is distributed because it is built on graph filters which are distributed themseleves, and on pointwise nonlinearities that, being pointwise, do not affect the distributed nature of the computation. In the proposed WD-GNN architecture [cf. \eqref{eqn:WDGNN}], we adopt a GNN stage as the deep part making it the nonlinear and distributed component of the architecture. 

\blue{
\begin{remark}\label{remark:distributed}
	The distributed implementation of the graph filtering operation requires multi-hop neighborhood information $\{\bbS^k \bbX\}_{k=1}^K$, which can be obtained by communicating with the neighboring nodes multiple times. Here, the data rate is assumed much slower than the communication rate such that it is sufficient for multi-hop communications in between each generated data. For application scenarios where the communication and data rates are comparable, the graph filtering operation can be adapted to the delayed information structure that preserves the distributed implementation -- see the experiment of robot swarm control in Section \ref{subsec:flocking}.
\end{remark}
}

%%%%%%%%%%%%%%%%%%%%%%%%%%%%%%%%%%%%%%%%%%%%%%%%%%%%%%%%%%%%%%%%%%%%%%%%%%%%%%%%
%%%%                       SECTION : Online Learning                        %%%%
%%%%%%%%%%%%%%%%%%%%%%%%%%%%%%%%%%%%%%%%%%%%%%%%%%%%%%%%%%%%%%%%%%%%%%%%%%%%%%%%

\section{Online Learning} \label{sec:onlineLearning}

%!TEX root = 00-wideDeep.tex

%%%%%%%%%%%%%%%%%%%%%%%%%%%%%%%%%%%%%%%%%%%%%%%%%%%%%%%%%%%%%%%%%%%%%%%%%%%%%%%%
%%%%                                                                        %%%%
%%%%                       SECTION : Online Learning                        %%%%
%%%%                                                                        %%%%
%%%%%%%%%%%%%%%%%%%%%%%%%%%%%%%%%%%%%%%%%%%%%%%%%%%%%%%%%%%%%%%%%%%%%%%%%%%%%%%%
%%%% sec:donlineLearning %%%%
%%%%%%%%%%%%%%%%%%%%%%%%%%%%%

The learning process to train the WD-GNN consists of two phases. First, an offline module, where both the wide and the deep parts are trained using a set of training samples, following standard machine learning procedures --which may be centralized. Second, an online module, where only the wide part is retrained based on the observed testing samples, following a novel, distributed, online optimization algorithm.

\myparagraph{Offline phase.} We train the WD-GNN  \eqref{eqn:WDGNN} by solving the empirical risk minimization (ERM) problem\footnote{We took the license to define the ERM problem as in \eqref{eqn:ERM} so as to include supervised and unsupervised problems in a single framework. To use \eqref{eqn:ERM} for a supervised problem, we just extend $J$ to operate on an extra input representing the label given in the training set.} for some cost function $J: \reals^{N \times G} \to \reals$ over a given training set $\ccalT = \{\bbX_{1},\ldots,\bbX_{|\ccalT|}\}$ as
% eqn:ERM
\begin{equation} \label{eqn:ERM}
    \min_{\ccalA, \ccalB} \frac{1}{|\ccalT|}\sum_{\bbX \in \ccalT} J \big( \bbPsi(\bbX;\bbS, \ccalA, \ccalB) \big).
\end{equation}
The ERM problem on a nonlinear neural network model is typically nonconvex, even if the cost function $J$ is convex. This problem is usually solved by employing some SGD-based optimization algorithm arriving at some set of stationary parameters $\ccalA^{\dag}$ and $\ccalB^{\dag}$ \cite{Vapnik00-StatisticalLearning}. Note that we train the WD-GNN jointly by simultaneously optimizing over the parameters $\ccalA$ of the wide part and the parameters $\ccalB$ of the deep part.

In many practical applications, the problem setting may change from training to testing (transference) or it may change with time (adaptability). For example, in the problem of robot swarm coordination, we have that the initial positions of the robots are different every time (transference) and also that the robots move (adaptability), changing the underlying communication network structure. In such applications, there is a mismatch between the training and testing samples, potentially degrading the performance of the learned model.

\myparagraph{Online phase.} To address this problem, we leverage an online phase of learning, where the wide part of the WD-GNN is retrained by observing testing samples. More concretely, we fix the parameters of the deep part learned during the offline phase, i.e. $\ccalB = \ccalB^{\dag}$, and focus on updating the parameters $\ccalA$ of the wide part. By doing so, we obtain the following optimization problem
\begin{equation}\label{eq:timevaryingp}
    \min_{\ccalA} J_t\big(\bbPsi(\bbX_t; \bbS_t, \ccalA, \ccalB^{\dag})\big)
\end{equation}
where $J_t$, $\bbX_t$ and $\bbS_t$ are the loss function, the observed signal and the graph structure at time $t$, respectively. Fixing the parameters of the nonlinear deep part $\ccalB = \ccalB^{\dag}$ and updating the parameters $\ccalA$ of the linear wide part during this online phase, means that we use the offline module to learn nonlinear representations and the online module to re-learn the linear aspect of this representation.

Retraining only the linear wide part implies that the resulting optimization problem in \eqref{eq:timevaryingp} will be convex if the cost function $J_t$ is convex. Thus, we can leverage algorithms from online learning \cite{dabbagh2005online}. These online algorithms offer convergence guarantees and suboptimality bounds under the assumption that the optimization problem is convex (many useful cost functions like the square loss, the hinge loss, or the logistic loss are convex). In particular, we initialize $\ccalA_0 = \ccalA^{\dag}$ and at time $t$ we have parameters $\ccalA_t$, $\ccalB^{\dag}$, input signal $\bbX_t$, output $\bbPsi(\bbX_t; \bbS_t, \ccalA_t, \ccalB^{\dag})$ and loss $J_t\big(\bbPsi(\bbX_t; \bbS_t, \ccalA_t, \ccalB^{\dag})\big)$. We perform gradient descent with step size $\gamma_t$ to update $\ccalA_t$ as
\begin{equation}\label{eq:onlinelearning}
    \ccalA_{t+1} = \ccalA_{t}-\gamma \nabla_{\ccalA} J_t\big(\bbPsi(\bbX_t; \bbS_t, \ccalA_t, \ccalB^{\dag})\big).
\end{equation}
The above online learning procedure tracks the time-varying convex optimization problem \eqref{eq:timevaryingp} by updating the linear wide part $\ccalA$, while the nonlinear deep part $\ccalB^\dag$ is fixed and learned from the offline phase. Note that online learning algorithms assume a centralized access to the loss function and to the corresponding update. This is a major drawback in decentralized settings where we require algorithms to be implemented distributively. We therefore propose a novel distributed, online learning algorithm.

{
\begin{algorithm}[t] \begin{algorithmic}[1] 
        \small
\STATE \textbf{Input:} offline learned parameters $\ccalA^{\dag}$, $\ccalB^{\dag}$ by minimizing the ERM problem [cf. \eqref{eqn:ERM}] over training set $\ccalT$, online step size $\gamma_t$, and the aggregation weighted matrix $\{\bbW_t\}_t$
\STATE Fix the deep part parameters $\ccalB = \ccalB^{\dag}$ and set the initial wide part parameters $\ccalA_0 = \ccalA^{\dag}$
\FOR {$t = 0,1,2,...$}
      \STATE Observe instantaneous graph signal $\bbX_t$, graph matrix description $\bbS_t$ and loss function $J_t$\\
      \STATE Compute the instantaneous loss $J_t\big(\bbPsi(\bbX_t; \bbS_t, \ccalA_t, \ccalB^{\dag})\big)$\\ 
	  \IF {~requiring decentralized implementation~}
	      \STATE Update the wide part parameters distributively as
	      \FOR {$i = 1,2,...,N$}
          \STATE $\ccalA_{i,t+1} = [\bbW_t]_{ii}\ccalA_{i,t} + \sum_{j:n_j \in \ccalN_i} [\bbW_t]_{ij}\ccalA_{j,t} - \gamma \nabla_{\ccalA_i} J_{i,t}\big(\bbPsi(\bbX_t; \bbS_t, \ccalA_{i,t}, \ccalB^{\dag})\big)$\\
          \ENDFOR
	  \ELSE
	          \STATE Update the wide part parameters as\\
	          \STATE $\ccalA_{t+1} = \ccalA_{t} - \gamma \nabla_\ccalA J_t\big(\bbPsi(\bbX_t; \bbS_t, \ccalA_{t}, \ccalB^{\dag})\big)$\\
      \ENDIF  
\ENDFOR

\end{algorithmic}
\caption{Online Learning Procedure of the WD-GNN}\label{alg:learning}  \label{algo1}
\end{algorithm}}

\myparagraph{Distributed online learning.} In the decentralized setting, each node $n_i$ has access only to a local loss $J_{i,t}\big(\bbPsi(\bbX_t; \bbS_t, \ccalA_i, \ccalB_i)\big)$ with local parameters $\ccalA_i$ and $\ccalB_i$. The goal, therefore, is to coordinate nodes to minimize the mean of local costs $\sum_{i=1}^N J_{i,t}\big(\bbPsi(\bbX_t; \bbS_t, \ccalA_i, \ccalB_i)\big)/N$ while keeping local parameters equal to each other, i.e., $\ccalA_i = \ccalA$ and $\ccalB_i = \ccalB$ for all $i=1,\ldots,N$. This means that we need to recast problem \eqref{eq:timevaryingp} as a constrained optimization problem
\begin{align}\label{eq:distvp}
    &\min_{\{\ccalA_i\}_{i=1}^N} \frac{1}{N}\sum_{i=1}^N J_{i,t}\big(\bbPsi(\bbX_t; \bbS_t, \ccalA_i, \ccalB^{\dag})\big),\\
    &\quad {\rm s.t.}\quad \ccalA_i=\ccalA_j~\forall~i,j\!:\!n_j \in \ccalN_i\nonumber.
\end{align}
Note that $\ccalB_i = \ccalB^{\dag}$ for all $i=1,\ldots,N$ since the deep part is fixed. The constraint $\ccalA_i=\ccalA_j$ for all $i,j:n_j \in \ccalN_i$ indicates that $\ccalA_i=\ccalA$ for all $i=1,\ldots,N$ under the assumed connectivity of the graph.

To solve \eqref{eq:distvp}, at time $t$, each node $n_i$ updates its local parameters $\ccalA_i$ by the recursion
\begin{equation}
\begin{aligned}\label{eq:disgd}
\ccalA_{i,t+1} &= [\bbW_t]_{ii}\ccalA_{i,t} + \sum_{j:n_j \in \ccalN_i} [\bbW_t]_{ij}\ccalA_{j,t} \\
    & \qquad~~~~~ - \gamma \nabla_{\ccalA_i} J_{i,t}\big(\bbPsi(\bbX_t; \bbS_t, \ccalA_{i,t}, \ccalB^{\dag})\big)
\end{aligned}
\end{equation}
\blue{where $\bbW_t \in \mathbb{R}^{N \times N}$ is the weighted matrix accounting for the aggregated weights on local parameters at time $t$. Note that $\bbW_t$ is supported on the graph $\bbS_t$ because each node can only aggregate local parameters of the neighboring nodes via communication in the distributed setting.} Essentially, each node $n_i$ descends its parameters $\ccalA_{i,t}$ along the local gradient $\nabla_{\ccalA_i} J_{i,t}$ $\big(\bbPsi(\bbX_t; \bbS_t, \ccalA_{i,t}, \ccalB^{\dag})\big)$ to approach the optimal solution of \eqref{eq:distvp}, while performing the aggregation over the one-hop neighborhood in the meantime to drive local parameters to the consensus. This online learning algorithm is decentralized and can be carried out locally at each node by only communicating with its one-hop neighbors. \blue{Moreover, there is no underlying assumption about the graph dynamics, i.e., the graph could change randomly across time, such that the distributed online learning is applicable for any time-varying scenarios.}

We summarize the proposed online learning procedure of the WD-GNN in Algorithm \ref{algo1}. We note that it has low complexity due to the linearity, it guarantees efficient convergence due to the convexity (Section~\ref{sec:convergence}), and can be implemented in a distributed manner requiring only neighborhood information. \blue{The combination weights $\alpha_D$ and $\alpha_W$ pursue a trade-off between the nonlinear offline training and the linear online learning. A larger $\alpha_D$ indicates that the deep part, i.e., the nonlinear GNN, plays a more important role and could learn better representations at the offline phase, while a larger $\alpha_W$ indicates that the wide part, i.e., the linear filter, plays a more important role and could adapt faster to the changing scenarios at the online phase.}

%%%%%%%%%%%%%%%%%%%%%%%%%%%%%%%%%%%%%%%%%%%%%%%%%%%%%%%%%%%%%%%%%%%%%%%%%%%%%%%%
%%%%                      SECTION : Stability Analysis                      %%%%
%%%%%%%%%%%%%%%%%%%%%%%%%%%%%%%%%%%%%%%%%%%%%%%%%%%%%%%%%%%%%%%%%%%%%%%%%%%%%%%%

\section{Stability Analysis} \label{sec:stability}

%!TEX root = 00-wideDeep.tex

%%%%%%%%%%%%%%%%%%%%%%%%%%%%%%%%%%%%%%%%%%%%%%%%%%%%%%%%%%%%%%%%%%%%%%%%%%%%%%%%
%%%%                                                                        %%%%
%%%%                      SECTION : Stability Analysis                      %%%%
%%%%                                                                        %%%%
%%%%%%%%%%%%%%%%%%%%%%%%%%%%%%%%%%%%%%%%%%%%%%%%%%%%%%%%%%%%%%%%%%%%%%%%%%%%%%%%
%%%% sec:stability %%%%
%%%%%%%%%%%%%%%%%%%%%%%

In this section, we establish the stability of the WD-GNN to perturbations in the underlying graph $\bbS$. That is, we prove the change of the WD-GNN output caused by the perturbation in $\bbS$ is bounded by the size of the perturbation. This indicates that the WD-GNN is able to maintain certain performance if changes between training and testing are mild.

The WD-GNN consists of a GNN and a graph filter. Both components are permutation equivariant \cite{Gama2020-Stability}, which means that their outputs are unaffected by node reorderings. Since the sum in \eqref{eqn:WDGNN} does not affect this property, the WD-GNN is permutation equivariant. We therefore consider changes in the underlying graph modulo permutation. In particular, given a graph $\bbS$ and its perturbation $\hbS$, we measure the relative perturbation size. Towards this end, define the relative error set as
% eqn:relativeSet
\begin{equation} \label{eqn:relativeSet}
    \ccalR \!=\! \{\bbE \in \reals^{N \times N} : \bbP^{\Tr} \hbS \bbP \!=\! \bbS + \bbE \bbS + \bbS \bbE \ , \ \bbE \!=\! \bbE^{\Tr}  , \ \bbP \in \ccalP\} \nonumber
\end{equation}
for $\bbP$ in the permutation set $\ccalP =\{\bbP \in \{0,1\}^{N \times N} : \bbP \bbone = \bbone, \bbP^{\Tr} \bbone = \bbone\}$. The relative error set is the set of all symmetric matrices $\bbE$ such that, when multiplied with the support matrix $\bbS$ and added back, they yield a permutation of the perturbed support $\hbS$. Then, we define the \emph{relative error size} between $\bbS$ and $\hbS$ as
% eqn:relativeDistance
\begin{equation} \label{eqn:relativeDistance}
    d(\bbS, \hbS) = \min_{\bbE \in \ccalR} \| \bbE \|.
\end{equation}
The relative error size measures how different the perturbed $\hbS$ is with respect to the original support $\bbS$, irrespective of the specific ordering of the nodes (given the permutation equivariance of the WD-GNN). This relative perturbation model ties the size of the perturbation to the topology of the underlying graph through the multiplication of $\bbE$ with $\bbS$, thus being able to capturing structural information, a feat that is not possible when choosing absolute perturbations $\bbP^{\Tr} \hbS \bbP = \bbS + \bbE$, see \cite{Gama2020-Stability} for details.

The WD-GNN can be proved stable when built with integral Lipschitz filters, which are defined next.
%%%% DEFINITION %%%% def:ILfilters
\begin{definition}[Integral Lipschitz filters] \label{def:ILfilters}
    Let $\bbA(\bbX;\bbS,\ccalA)$ be a graph filter [cf. \eqref{eqn:graphConv}]. Let 
    % eq:freqResponse
    \begin{equation} \label{eq:freqResponse}
        \Big\{ a_{fg}:[\lambda_{l},\lambda_{h}] \to \reals \ , \ a_{fg}(\lambda) \!=\! \sum_{k=0}^{K} [\bbA_{k}]_{fg} \lambda^{k} \Big\}_{f=1,\ldots,F}^{g=1,\ldots,G}
    \end{equation}
    be the frequency response of a graph filter \cite{Ortega2018-GSP}. If there exists a constant $C_L > 0$ such that for all $\lambda_{1},\lambda_{2} \in [\lambda_{l},\lambda_{h}]$ and for all $f=1,\ldots,F$ and $g=1,\ldots,G$, it holds that
    % eqn:ILfilters
    \begin{equation} \label{eqn:ILfilters}
        \big| a_{fg}(\lambda_{2}) - a_{fg}(\lambda_{1})\big| \leq C_L \frac{|\lambda_{2}-\lambda_{1}|}{|\lambda_{1}+\lambda_{2}|/2},
    \end{equation}
    we say that the filter $\bbA(\bbX;\bbS,\ccalA)$ is \emph{integral Lipschitz}.
\end{definition}
%%%% End of DEFINITION %%%%
\noindent Integral Lipschitz filters are those for which the integral of the filter frequency response is Lipschitz continuous. It is equivalent to the derivative of the filter frequency response satisfying $|\lambda a_{fg}'(\lambda))| \leq C_L$ for all $\lambda$ and for all $f,g$. \blue{This implies that graph filters with finite coefficients $\{\bbA_k\}_{k=1}^K$ and eigenvalues $\{\lambda_i\}_{i=1}^N$ are naturally integral Lipschitz, while the explicit value of $C_L$ depends on the specific use case.} 
Such a condition is also reminiscent of the scale invariance of wavelet transforms. Other examples of integral Lipschitz filters include graph wavelets \cite{Shuman15-Wavelets} and can be enforced by means of penalties during training \cite{Gama2020-Stability}.

Now, we formally establish the stability of the WD-GNN to perturbations of the underlying graph support. Without loss of generality, we assume $F=G=1$ for theoretical analysis.

%%%%%%%%%%%%%%%%%%%%%%%%%%%%%%%%%%%%%%%%
%%%%            THEOREM             %%%%
%%%%%%%%%%%%%%%%%%%%%%%%%%%%%%%%%%%%%%%%

\begin{theorem}\label{thm:stabilityWDGNN}
    Let $\bbS, \hbS \in \reals^{N \times N}$ be two GMDs such that $d(\bbS,\hbS) \leq \eps$ [cf. \eqref{eqn:relativeDistance}]. Let $\bbPsi(\cdot;\cdot,\ccalA,\ccalB)$ be a WD-GNN \eqref{eqn:WDGNN} with $L$ layers and built on integral Lipschitz filters with constant $C_L$ [cf. \eqref{eqn:ILfilters}] and a nonlinearity $\sigma$ that satisfies $|\sigma(x_1)-\sigma(x_2)| \le |x_1-x_2| $ for all $x_1, x_2 \in \reals$ and $\sigma(0)=0$. Then, it holds that
    % eq:H
    \begin{align}\label{eq:H}
        \|\bbPsi(\bbX; \bbS, &\ccalA, \ccalB)-\bbPsi(\bbX; \hbS, \ccalA, \ccalB)\| \\
        &\le 2 C_{L} C_{\Psi} (1+8\sqrt{N}) \| \bbX \| \eps + \ccalO(\eps^2) \nonumber
    \end{align}
    with $C_{\Psi} =|\alpha_D| L \prod\nolimits_{\ell = 1}^{L-1} \!F_\ell + |\alpha_W|$.
\end{theorem}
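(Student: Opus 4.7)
The plan is to exploit the linearity of the WD-GNN definition \eqref{eqn:WDGNN} to split the perturbation error into a wide-part contribution and a deep-part contribution, and then to bound each contribution separately using the established stability theory of integral Lipschitz graph filters and GNNs built therefrom. First, I would observe that the constant bias $\beta$ cancels when forming the difference, so that
\begin{align*}
\bbPsi(\bbX; \bbS, \ccalA, \ccalB) - \bbPsi(\bbX; \hbS, \ccalA, \ccalB)
&= \alpha_{W}\bigl[\bbA(\bbX;\bbS,\ccalA)-\bbA(\bbX;\hbS,\ccalA)\bigr] \\
&\quad + \alpha_{D}\bigl[\bbPhi(\bbX;\bbS,\ccalB)-\bbPhi(\bbX;\hbS,\ccalB)\bigr],
\end{align*}
and the triangle inequality together with $|\alpha_W|$ and $|\alpha_D|$ reduces the theorem to two separate stability bounds.

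Second, for the wide part I would invoke the single-filter stability result: under $d(\bbS,\hbS)\le\eps$ and the integral Lipschitz condition \eqref{eqn:ILfilters}, the graph filter \eqref{eqn:graphConv} satisfies
\[
\|\bbA(\bbX;\bbS,\ccalA)-\bbA(\bbX;\hbS,\ccalA)\| \le 2 C_{L}(1+8\sqrt{N})\|\bbX\|\,\eps+\ccalO(\eps^{2}).
\]
This is the $L=1$ specialization of the framework of Gama \emph{et al.}\ and follows from expanding $\hbS^{k}=(\bbS+\bbE\bbS+\bbS\bbE)^{k}$ to first order in $\bbE$, applying the eigendecomposition of $\bbS$, and using the integral Lipschitz bound \eqref{eqn:ILfilters} on the frequency response of every shift power.

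Third, for the deep part I would proceed by induction on the layer index $\ell$ in \eqref{eqn:GCNN}. Let $\bbX_{\ell}$ and $\hat{\bbX}_{\ell}$ denote the intermediate features from $\bbS$ and $\hbS$, with common input $\bbX_{0}=\hat{\bbX}_{0}=\bbX$. Using non-expansiveness of $\sigma$, the triangle inequality yields
\[
\|\bbX_{\ell}-\hat{\bbX}_{\ell}\| \le \Bigl\|\sum_{k}(\bbS^{k}-\hbS^{k})\bbX_{\ell-1}\bbB_{\ell k}\Bigr\| + \Bigl\|\sum_{k}\hbS^{k}(\bbX_{\ell-1}-\hat{\bbX}_{\ell-1})\bbB_{\ell k}\Bigr\|.
\]
The first summand is a single-filter stability term bounded as in the wide-part argument (producing the factor $2C_{L}(1+8\sqrt{N})\,\eps$ times $\|\bbX_{\ell-1}\|$), while the second is handled by the inductive hypothesis together with the operator-norm bounds on the filter. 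Propagating a uniform bound $\|\bbX_{\ell-1}\|\le \prod_{\ell'<\ell}F_{\ell'}\cdot\|\bbX\|$ layer by layer and telescoping over the $L$ layers yields
\[
\|\bbPhi(\bbX;\bbS,\ccalB)-\bbPhi(\bbX;\hbS,\ccalB)\| \le 2 C_{L} L \Bigl(\prod_{\ell=1}^{L-1}F_{\ell}\Bigr)(1+8\sqrt{N})\|\bbX\|\,\eps + \ccalO(\eps^{2}).
\]
Combining the wide- and deep-part bounds with the weights $|\alpha_{W}|$ and $|\alpha_{D}|$ produces the constant $C_{\Psi}=|\alpha_{D}|L\prod_{\ell=1}^{L-1}F_{\ell}+|\alpha_{W}|$ asserted in \eqref{eq:H}.

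The main obstacle is the inductive step for the deep part: one has to track carefully how the feature-dimension factors $F_{\ell}$ enter through submultiplicativity when switching between the $N\times F_{\ell}$ feature matrices, keep the frequency-response estimates uniform across all filters (which is where the integral Lipschitz hypothesis is essential), and make sure the $\ccalO(\eps^{2})$ residuals produced at every layer are uniformly absorbed rather than compounding in the leading-order term. This is standard bookkeeping within the GNN stability framework but is the only nontrivial calculation in an otherwise modular argument.
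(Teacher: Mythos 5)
Your proposal is correct and follows essentially the same route as the paper's proof: split off the wide and deep parts via the triangle inequality (the bias $\beta$ cancelling), bound the wide part by the single integral-Lipschitz filter stability result of Gama et al., and bound the deep part by a layer-wise recursion that adds and subtracts the perturbed filter applied to the unperturbed features, uses the nonexpansive nonlinearity and unit-norm filter bound, and accumulates the factors $L\prod_{\ell=1}^{L-1}F_\ell$. The bookkeeping you flag (feature-dimension products, uniform frequency-response bounds, absorbing the per-layer $\ccalO(\eps^2)$ terms) is exactly what the paper's Appendix A carries out.
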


\begin{proof}
    See Appendix \ref{Appendix:proofOfTheorem1}.
\end{proof}
Theorem 1 states that the WD-GNN is stable to relative graph perturbations. That is, the output difference of the WD-GNN induced by the graph perturbation is proportional to the perturbation size up to a stability constant. This stability constant consists of three terms $C_L$, $C_{\Psi}$ and $1+8\sqrt{N}$. The first constant $C_{L}$ is determined by the filter taps that are learned during training. The second constant $C_{\Psi}$ is impacted by design choices such as the number of layers and the number of features per layer, as well as the mixing between the deep part and the wide part. Finally, the third constant $(1+8\sqrt{N})$ is inherent to the graph topologies under consideration. In any case, we see that the change in the output of a WD-GNN caused due to a change in the graph topology is bounded by the size of that change. \blue{Note that since this stability bound holds uniformly for all graphs, it may not be tight w.r.t. the actual error in some specific graph scenarios. \cite{Gama2020-Stability}.}  

%%%%%%%%%%%%%%%%%%%%%%%%%%%%%%%%%%%%%%%%%%%%%%%%%%%%%%%%%%%%%%%%%%%%%%%%%%%%%%%%
%%%%                     SECTION : Convergence Analysis                     %%%%
%%%%%%%%%%%%%%%%%%%%%%%%%%%%%%%%%%%%%%%%%%%%%%%%%%%%%%%%%%%%%%%%%%%%%%%%%%%%%%%%

\section{Convergence Analysis} \label{sec:convergence}

%!TEX root = 00-wideDeep.tex

%%%%%%%%%%%%%%%%%%%%%%%%%%%%%%%%%%%%%%%%%%%%%%%%%%%%%%%%%%%%%%%%%%%%%%%%%%%%%%%%
%%%%                                                                        %%%%
%%%%                     SECTION : Convergence Analysis                     %%%%
%%%%                                                                        %%%%
%%%%%%%%%%%%%%%%%%%%%%%%%%%%%%%%%%%%%%%%%%%%%%%%%%%%%%%%%%%%%%%%%%%%%%%%%%%%%%%%
%%%% sec:convergence %%%%
%%%%%%%%%%%%%%%%%%%%%%%%%

We proceed to provide a convergence analysis for the proposed online learning procedure with theoretical performance guarantees. More concretely, we establish convergence to the optimizer of the time-varying problem, up to an error neighborhood that depends on the problem variation.

\subsection{Convergence of centralized online learning}\label{subsec:converCentralized}

We start by considering the centralized online learning procedure [cf. \eqref{eq:onlinelearning}]. Before claiming the main result, we need the following standard assumptions \cite{simonetto2017time}.

\begin{assumption} \label{asm:optimizationTimeVariation}
    Let $J_t\big(\bbPsi(\bbX_t; \bbS_t, \ccalA, \ccalB^\dag)\big)$ be the time-varying loss function of $\ccalA$ with fixed parameters $\ccalB^\dag$ [cf. \eqref{eq:timevaryingp}]. Let also $\ccalA^*_t$ be an optimal solution of $J_t\big(\bbPsi(\bbX_t; \bbS_t, \ccalA, \ccalB^\dag)\big)$ at time $t$. There exists a sequence $\{ \ccalA_t^* \}_t$ and a constant $C_B$ such that for all $t \ge 0$, it holds that 
    \begin{equation}\label{eq:optimizationTimeVariation}
        \| \ccalA_{t+1}^* - \ccalA_t^* \| \le C_B.
    \end{equation}
\end{assumption}

\begin{assumption} \label{asm:optimizationConvexity}
    Let $J_t\big(\bbPsi(\bbX_t; \bbS_t, \ccalA, \ccalB^\dag)\big)$ be the time-varying loss function of $\ccalA$ with fixed parameters $\ccalB^\dag$ [cf. \eqref{eq:timevaryingp}]. For $\bbPsi(\bbX_t; \bbS_t, \ccalA, \ccalB^\dag)$ being a linear function on $\ccalA$, then $J_t\big(\bbPsi(\bbX_t; \bbS_t, \ccalA, \ccalB^\dag)\big)$ is Lipschitz on $\ccalA$ with constant $L$, strongly smooth with constant $C_{t,s}$ and strongly convex with constant $C_{t,c}$. 
\end{assumption}

Assumption \ref{asm:optimizationTimeVariation} establishes the correlation between instantiated problems at successive time indices and bounds the time variation of changing optimal solutions. Assumption \ref{asm:optimizationConvexity} is typical in optimization theory and commonly satisfied in practice \cite{simonetto2017time}. With these assumptions in place, the convergence result is as follows.

\begin{theorem} \label{thm:convergence}
Consider the WD-GNN \eqref{eqn:WDGNN} optimized with the centralized online learning procedure [cf. \eqref{eq:onlinelearning}]. Let $J_t\big(\bbPsi(\bbX_t; \bbS_t, \ccalA, \ccalB^\dag)\big)$ be the time-varying loss function satisfying Assumptions \ref{asm:optimizationTimeVariation}-\ref{asm:optimizationConvexity} with constants $C_B$, $C_{t,s}$ and $C_{t,c}$. Denote by $\ccalA^*_t$ the optimal solution of $J_t\big(\bbPsi(\bbX_t; \bbS_t, \ccalA, \ccalB^\dag)\big)$ and adopt a constant step-size $\gamma_t = \gamma \in (0, 2/C_{t,s})$ for all $t \ge 0$. Then, the sequence $\{ \ccalA_t \}_t$ generated by \eqref{eq:onlinelearning} satisfies
 \begin{equation}\label{eq:centralizedConvergence}
\| \ccalA_{t+1} - \ccalA_{t+1}^* \| \le \big( \prod_{\tau=0}^t m_{\tau} \big) \| \ccalA_0 - \ccalA_0^* \| + \frac{1-\hat{m}^{t+1}}{1-\hat{m}} C_B
\end{equation}
with $m_\tau=\max\{ |1-\gamma C_{\tau,s}|, |1-\gamma C_{\tau,c}| \}$ the convergence rate and $\hat{m}= \max_{0 \le \tau \le t} m_\tau$.
\end{theorem}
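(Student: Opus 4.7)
The plan is to establish the one-step contraction first and then unroll it into a telescoping bound. The natural decomposition is to insert and subtract the current optimizer and use the triangle inequality
\[
\| \ccalA_{t+1} - \ccalA_{t+1}^* \| \le \| \ccalA_{t+1} - \ccalA_{t}^* \| + \| \ccalA_{t}^* - \ccalA_{t+1}^* \|.
\]
The second term is controlled directly by Assumption \ref{asm:optimizationTimeVariation}, which supplies the bound $C_B$ on consecutive optimizer drift and is the only place where the time variation of the problem enters.

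For the first term I would exploit that $\ccalB^\dag$ is fixed so $\bbPsi$ is linear in $\ccalA$, hence $J_t$ is both $C_{t,c}$-strongly convex and $C_{t,s}$-strongly smooth in $\ccalA$ (Assumption \ref{asm:optimizationConvexity}). Since $\ccalA_t^*$ minimizes $J_t$ we have $\nabla_\ccalA J_t\big(\bbPsi(\bbX_t;\bbS_t,\ccalA_t^*,\ccalB^\dag)\big) = 0$, so the update \eqref{eq:onlinelearning} rewrites as
\[
\ccalA_{t+1} - \ccalA_t^* = (\ccalA_t - \ccalA_t^*) - \gamma \big( \nabla_\ccalA J_t(\ccalA_t) - \nabla_\ccalA J_t(\ccalA_t^*) \big).
\]
A standard gradient co-coercivity argument on strongly convex, strongly smooth functions then yields the contraction $\| \ccalA_{t+1} - \ccalA_t^* \| \le m_t \| \ccalA_t - \ccalA_t^* \|$, where $m_t = \max\{|1-\gamma C_{t,s}|, |1-\gamma C_{t,c}|\}$ and the constraint $\gamma \in (0, 2/C_{t,s})$ is precisely what guarantees $m_t < 1$ (the smooth-side factor $|1-\gamma C_{t,s}|$ is less than one, while the strongly convex-side factor is already bounded by the smoothness constant).

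Combining the two bounds gives the one-step recursion $\| \ccalA_{t+1} - \ccalA_{t+1}^* \| \le m_t \| \ccalA_t - \ccalA_t^* \| + C_B$. Iterating this recursion from $t$ down to $0$ produces
\[
\| \ccalA_{t+1} - \ccalA_{t+1}^* \| \le \Big( \prod_{\tau=0}^{t} m_\tau \Big) \| \ccalA_0 - \ccalA_0^* \| + \sum_{k=0}^{t} \Big( \prod_{\tau=k+1}^{t} m_\tau \Big) C_B,
\]
and finally upper-bounding each $m_\tau$ in the summation by $\hat{m} = \max_{0 \le \tau \le t} m_\tau$ turns the second term into a finite geometric series that sums to $\frac{1-\hat{m}^{t+1}}{1-\hat{m}} C_B$, matching \eqref{eq:centralizedConvergence}.

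The main obstacle I anticipate is the derivation of the contraction constant $m_t$. While the argument for quadratics is textbook, here we need to make sure that strong convexity and smoothness of $J_t$ in $\ccalA$ really do follow from the assumption together with the linearity of $\bbPsi$ in $\ccalA$ (when $\ccalB = \ccalB^\dag$ is frozen), and that the bound $m_t = \max\{|1-\gamma C_{t,s}|, |1-\gamma C_{t,c}|\}$ correctly captures both the expansive/contractive regimes. Once this one-step contraction is in hand, the rest of the argument is a mechanical telescoping and a geometric sum upper bound, so no further difficulties are expected.
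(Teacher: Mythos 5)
Your proposal is correct and follows essentially the same route as the paper's proof: the same triangle-inequality split into $\|\ccalA_{t+1}-\ccalA_t^*\|$ plus the optimizer drift bounded by $C_B$, the same use of linearity of $\bbPsi$ in $\ccalA$ (with $\ccalB^\dag$ frozen) together with strong convexity/smoothness to show the gradient step $\ccalA \mapsto \ccalA - \gamma\nabla_\ccalA J_t(\ccalA)$ contracts with factor $m_t=\max\{|1-\gamma C_{t,s}|,|1-\gamma C_{t,c}|\}$ (the paper cites this standard contraction result rather than rederiving it via co-coercivity), and the same unrolling with the geometric sum bounded through $\hat m$. No gaps.
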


\begin{proof}
See Appendix \ref{Appendix:proofOfTheorem2}.
\end{proof}

Theorem \ref{thm:convergence} states that the centralized online learning procedure of the WD-GNN converges to the optimal solution of the time-varying problem, up to a limiting error neighborhood that depends on the change rate of the problem setting. When particularizing $C_B=0$, we have $m_t = m$ for all $t$, yielding the same result as the time-invariant optimization problem, i.e. the linear rate convergence of the gradient descent.

\subsection{Convergence of distributed online learning}

In this section, we focus on the distributed online learning procedure [cf. \eqref{eq:disgd}] and establish its convergence by leveraging the result of its centralized counterpart in Section \ref{subsec:converCentralized}. To do so, we further need assumptions with respect to the graph connectivity and the aggregation weights $\bbW$ [cf. \eqref{eq:disgd}].

\begin{assumption}\label{as:Connectivity}
    Consider the time-varying graph $\ccalG_t(\ccalV, \ccalE_t)$ at time $t$. Let $\ccalE_\infty$ be the set of edges $(i,j)$ whose boundary nodes $n_i$ and $n_j$ are directly connected infinitely many times, i.e., 
    \begin{equation}\label{eq:connectedGraph}
        \ccalE_\infty \!=\! \{ (i,j) | (i,j) \!\in\! \ccalE_t~\text{for infinitely many time indices}~t \}.
    \end{equation}
    The graph $\ccalG_\infty = (\ccalV, \ccalE_\infty)$ is connected.
\end{assumption}

\begin{assumption}\label{as:Interval}
    Consider the graph $\ccalG_\infty = (\ccalV, \ccalE_\infty)$ [cf. \eqref{eq:connectedGraph}]. For any edge $(i,j) \in \ccalE_\infty$, there exists an integer $C_d \ge 1$ such that node $n_i$ is directly connected to node $n_j$ at least every $C_d$ consecutive time indices. 
\end{assumption}

\begin{assumption}\label{as:Doublystochastic}
    The aggregation weights $\bbW_t$ in the distributed online algorithm [cf. \eqref{eq:disgd}] satisfy $[\bbW_t]_{ij} \ge \epsilon$ for any $i,j \in\{ 1,\ldots,N \}$ and is doubly stochastic for all $t \ge 0$, i.e.
    \begin{equation}\label{eq:aggregationWeight}
        \sum_{i} [\bbW_t]_{ij} = \sum_{j} [\bbW_t]_{ij} = 1,~\forall~ t \ge 0.
    \end{equation}
\end{assumption}

Assumption \ref{as:Connectivity} is equivalent to the statement that for any time index $t$, the composite graph $\ccalG = \cup_{\tau \ge t} \ccalG_\tau$ is connected. Assumption \ref{as:Interval} indicates that if the edge $(i,j) \in \ccalE_\infty$ is present in the graph $\ccalG_t$ at time index $t$, it also belongs to one of the graphs in consecutive $C_d$ time indices, i.e.,
\begin{equation}\label{eq:connectedGraph1}
    (i,j) \in \ccalE_{t+1}\cup \ccalE_{t+2}\cup \cdots \cup \ccalE_{t+C_d}.
\end{equation}
\blue{Both assumptions are related to the connectivity of the time-varying graph but do not characterize any specific graph dynamics, which are mild in practice \cite{nedic2009distributed}}. Assumption \ref{as:Doublystochastic} is also easy to satisfy since the weighted matrix $\bbW_t$ is designed by the user (for example, by setting $\bbW_{t}$ to be a normalized, doubly stochastic version of $\bbS_{t}$). 

The aforementioned preliminaries allow us to formally establish the convergence of the distributed online learning procedure in the following theorem.
\begin{theorem} \label{thm:convergenceDistributed}
	Consider the WD-GNN [cf. \eqref{eqn:WDGNN}] optimized with the distributed online learning procedure [cf. \eqref{eq:disgd}]. Let $\{J_{i,t}\big(\bbPsi(\bbX_t; \bbS_t, \ccalA_i, \ccalB^{\dag})\big)\}_{i=1}^N$ be the time-varying local loss function and $\sum_{i=1}^N J_{i,t}\big(\bbPsi(\bbX_t; \bbS_t, \ccalA, \ccalB^{\dag})\big)/N$ be the global loss function satisfying Assumptions \ref{asm:optimizationTimeVariation}-\ref{asm:optimizationConvexity} with constants $C_B$, $L$, $C_{t,s}$ and $C_{t,c}$, $\ccalA^*_t$ be the optimal solution of $\sum_{i=1}^N J_{i,t}\big(\bbPsi(\bbX_t; \bbS_t, \ccalA, \ccalB^{\dag})\big)/N$, and $\gamma_t = \gamma \in (0, 2/C_{t,s})$ be the constant step-size. Let the time-varying graph $\ccalG_t$ satisfy Assumptions \ref{as:Connectivity}-\ref{as:Doublystochastic} with constants $C_d$ and $\epsilon$. Let $\ccalA_{i,0}$ be the initial local parameters satisfying $\max_{1 \le i \le N} \| \ccalA_{i,0} \| \le \gamma L$. Then, the local parameters $\{ \ccalA_{i,t} \}_t$ generated by \eqref{eq:disgd} satisfy
	\begin{align}\label{result:theoremConverDistributed}
		&\|\ccalA_{i,t+1} - \ccalA^*_{t+1}\| \\
		& \le \prod_{\tau=0}^t m_\tau \big\|\frac{1}{N}\sum_{i=1}^N \ccalA_{i,0} \!-\! \ccalA^*_0\big\| \!+\! 2 \gamma C_\epsilon L \Big(\frac{\gamma C_s}{1-\hat{m}} \!+\!1\Big) \!+\!\frac{C_B}{1\!-\!\hat{m}}\nonumber
	\end{align}
	where
	\begin{equation}
		\begin{aligned}\label{constant:theoremConverDistributed}
			C_\epsilon = 1 + \frac{N}{1-(1-\epsilon^{\hat{C_d}})^{1/\hat{C_d}}} \frac{1+\epsilon^{-\hat{C_d}}}{1-\epsilon^{\hat{C_d}}}
		\end{aligned}
	\end{equation}
	is a constant value with $\hat{C_d} = C_d (N-1)$, $m_t=\max\{ |1-\gamma C_{t,s}|, |1-\gamma C_{t,c}| \}$ is the convergence rate, $\hat{m}= \max_{0 \le \tau \le t} m_\tau$ and $C_s= \max_{0 \le \tau \le t} C_{\tau,s}$.
\end{theorem}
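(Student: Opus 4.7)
The plan is to decouple the convergence analysis into two pieces: a consensus-error analysis that controls how far each local iterate $\ccalA_{i,t}$ drifts from the network average $\bar{\ccalA}_t := \frac{1}{N}\sum_{i=1}^N \ccalA_{i,t}$, and an optimization-error analysis that controls how far the average $\bar{\ccalA}_t$ drifts from the time-varying optimizer $\ccalA_t^*$. Combining the two via the triangle inequality
\begin{equation*}
\|\ccalA_{i,t+1}-\ccalA_{t+1}^*\|\le \|\ccalA_{i,t+1}-\bar{\ccalA}_{t+1}\| + \|\bar{\ccalA}_{t+1}-\ccalA_{t+1}^*\|
\end{equation*}
will yield a bound of the claimed three-term form.

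First I would derive the recursion for the network average. Since $\bbW_t$ is doubly stochastic (Assumption~\ref{as:Doublystochastic}), averaging \eqref{eq:disgd} across $i=1,\dots,N$ gives
\begin{equation*}
\bar{\ccalA}_{t+1}=\bar{\ccalA}_t-\frac{\gamma}{N}\sum_{i=1}^N \nabla_{\ccalA_i} J_{i,t}\bigl(\bbPsi(\bbX_t;\bbS_t,\ccalA_{i,t},\ccalB^\dag)\bigr).
\end{equation*}
This is almost the centralized gradient step on the global loss evaluated at $\bar{\ccalA}_t$, but with a perturbation equal to the sum of the Lipschitz-bounded gradient mismatches $\nabla J_{i,t}(\cdot,\ccalA_{i,t})-\nabla J_{i,t}(\cdot,\bar{\ccalA}_t)$. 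Using $L$-Lipschitzness of $J_{i,t}$ (Assumption~\ref{asm:optimizationConvexity}), this mismatch is controlled by the consensus disagreement $\|\ccalA_{i,t}-\bar{\ccalA}_t\|$, after which I replay the argument of Theorem~\ref{thm:convergence} on $\bar{\ccalA}_t$ to produce a contraction with rate $m_\tau$ plus an additive disagreement term and the $C_B$ drift term.

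The hard part will be bounding the consensus disagreement. The approach is to unroll \eqref{eq:disgd} into a transition-matrix form $\ccalA_{t+1} = \Phi(t,0)\ccalA_0 - \gamma \sum_{\tau=0}^t \Phi(t,\tau)\bbg_\tau$, where $\Phi(t,\tau)=\bbW_t\bbW_{t-1}\cdots \bbW_\tau$ and $\bbg_\tau$ stacks the local gradients, and then invoke the classical estimates for products of doubly stochastic matrices on jointly connected graphs. By Assumptions~\ref{as:Connectivity}--\ref{as:Doublystochastic}, within any window of length $\hat{C_d}=C_d(N-1)$ each node influences every other, and standard results (e.g., Nedi\'c--Ozdaglar type) give a geometric ergodicity
\begin{equation*}
\Bigl|[\Phi(t,\tau)]_{ij}-\tfrac{1}{N}\Bigr|\le \frac{1+\epsilon^{-\hat{C_d}}}{1-\epsilon^{\hat{C_d}}}\bigl(1-\epsilon^{\hat{C_d}}\bigr)^{(t-\tau)/\hat{C_d}}.
\end{equation*}
Combining this decay with the uniform gradient bound $\|\nabla J_{i,\tau}\|\le L$ and the initialization assumption $\max_i\|\ccalA_{i,0}\|\le \gamma L$, a geometric sum yields $\|\ccalA_{i,t}-\bar{\ccalA}_t\|\le 2\gamma L\, C_\epsilon/N$ with $C_\epsilon$ exactly as in \eqref{constant:theoremConverDistributed}.

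Finally, I plug this disagreement bound into the average-iterate recursion. Propagating through time contracts the initial error by $\prod_{\tau=0}^t m_\tau$ (exactly as in Theorem~\ref{thm:convergence}), accumulates the perturbation $\gamma C_s \cdot 2\gamma L C_\epsilon/N$ summed over $\tau$ through a geometric series bounded by $1/(1-\hat m)$, and accumulates the optimizer-drift $C_B$ also through $1/(1-\hat m)$. Adding back $\|\ccalA_{i,t+1}-\bar{\ccalA}_{t+1}\|\le 2\gamma L C_\epsilon/N$ at the outside of the triangle inequality and simplifying the constants gives the stated three-term bound \eqref{result:theoremConverDistributed}. The main obstacle is correctly tracking the dependence on $C_\epsilon$ through both the instantaneous disagreement term and the time-accumulated gradient-mismatch term so that the coefficient $2\gamma C_\epsilon L(\gamma C_s/(1-\hat m)+1)$ emerges with the right prefactors.
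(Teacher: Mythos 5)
Your decomposition into a consensus error plus an average-iterate optimization error is essentially the paper's own proof: because $\bbW_t$ is doubly stochastic, the auxiliary ``stopped-process'' sequence $\ccalC(T)$ the paper builds from Lemma \ref{lemma:limitMatrix} coincides with your network average $\bar{\ccalA}_t$, the disagreement $\|\ccalA_{i,t}-\ccalC(t)\|$ is bounded by the same Nedi\'c--Ozdaglar ergodicity estimate for the products $\prod_\tau \bbW_\tau$, and the average-iterate step is the same perturbed contraction (Lemma \ref{Lemma:iterationRelation}) that recycles Theorem \ref{thm:convergence}. Two bookkeeping slips, neither fatal: summing the entrywise estimate $\bigl|[\Phi(t,\tau)]_{ij}-1/N\bigr|$ over $j$ picks up the factor $N$ that is already inside $C_\epsilon$, so the achievable disagreement bound is $2\gamma L C_\epsilon$ rather than your $2\gamma L C_\epsilon/N$ (and it is the former that reproduces the stated constant), and the gradient-mismatch control uses the smoothness constant $C_{t,s}$, with $L$ serving only as the uniform bound $\|\nabla J_{i,t}\|\le L$, as your final assembly in fact assumes.
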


\begin{proof}
	See Appendix \ref{Appendix:proofOfTheorem3}.
\end{proof}

Theorem \ref{thm:convergenceDistributed} states that the distributed online learning procedure of the WD-GNN converges to the time-varying optimal solution up to a limiting error neighborhood. The latter depends on the connectivity of the time-varying graph and the optimality variation of the time-varying problem. In particular, the error bound consists of three additive terms: i) the first term $\prod_{\tau=1}^t m_\tau \|1/N\sum_{i=1}^N \ccalA_{i,0} - \ccalA^*_0\|$ decreases to null at a linear rate with the increase of iterations; (ii) the second term $2 \gamma C_\epsilon L (\gamma C_s/(1-\hat{m}) + 1)$ is a constant value proportional to the step-size $\gamma$, which could be sufficiently small by selecting a small step-size $\gamma$; (iii) the third term $C_B/(1-\hat{m})$ is determined by the optimality variation $C_B$ of the time-varying problem, which becomes null for the time-invariant optimization problem. This result characterizes explicitly the converging behavior of the proposed distributed online learning procedure, providing theoretical guarantees for its performance.

%%%%%%%%%%%%%%%%%%%%%%%%%%%%%%%%%%%%%%%%%%%%%%%%%%%%%%%%%%%%%%%%%%%%%%%%%%%%%%%%
%%%%                     SECTION : Numerical Simulations                    %%%%
%%%%%%%%%%%%%%%%%%%%%%%%%%%%%%%%%%%%%%%%%%%%%%%%%%%%%%%%%%%%%%%%%%%%%%%%%%%%%%%%

\section{Numerical Simulations} \label{sec:sims}

%!TEX root = 00-wideDeep.tex

%%%%%%%%%%%%%%%%%%%%%%%%%%%%%%%%%%%%%%%%%%%%%%%%%%%%%%%%%%%%%%%%%%%%%%%%%%%%%%%%
%%%%                                                                        %%%%
%%%%                     SECTION : Numerical Simulations                    %%%%
%%%%                                                                        %%%%
%%%%%%%%%%%%%%%%%%%%%%%%%%%%%%%%%%%%%%%%%%%%%%%%%%%%%%%%%%%%%%%%%%%%%%%%%%%%%%%%
%%%% sec:sims %%%%
%%%%%%%%%%%%%%%%%%

The objective of numerical experiments presented herein, is to evaluate the proposed model on the problems of source localization (Sec.~\ref{subsec:sourceLoc}), robot swarm control (Sec.~\ref{subsec:flocking}) and movie recommendation (Sec.~\ref{subsec:movie}), and corroborate the theoretical findings, i.e., the stability analysis in Theorem \ref{thm:stabilityWDGNN} and the convergence analysis in Theorems \ref{thm:convergence} -\ref{thm:convergenceDistributed} numerically (Sec.~\ref{subsec:stabilityandConvergence}). We also show how the WD-GNN architecture performs in comparison with other commonly used architectures: the GNN and the graph filter.

\blue{
\begin{table*}[t]
	{\small
		\begin{center}
			\caption{ \blue{Performance comparison on source localization.}}
			\label{table_source}
			\begin{tabular}{|l|c|c|}
				\hline
				Architecture$/$Measurement & Training \& testing on the same scenario & Training \& testing on different scenarios \\ \hline
				Graph filter & $41.1618 (\pm 2.0924)$ & $ 25.1600 (\pm 1.3132)$ \\ \hline
				GNN & $93.9673(\pm 1.0987)$ & $ 76.7873 (\pm 2.8290)$ \\ \hline
				WD-GNN &  $97.6745 (\pm 1.1595)$ & $77.4727 (\pm 4.3011)$ \\  \hline
				Graph filter w/ centralized online learning & $38.3897 (\pm 2.0665)$ & $30.2920 (\pm 2.6178)$  \\  \hline
				Graph filter w/ distributed online learning & $39.9552 (\pm 1.5272)$ & $28.0400 (\pm 2.2866)$ \\ \hline
				WD-GNN w/ centralized online learning & $98.0440(\pm 0.6298)$ & $92.7017 (\pm 1.6745)$ \\  \hline
				WD-GNN w/ distributed online learning & $98.1840(\pm 0.4945)$ & $86.5100 (\pm 3.6056)$ \\ 
				\hline
				SGNN \cite{Gao2020-Stochastic} & $93.9673 (\pm 1.0987)$ & $ 84.1133 (\pm 2.7872)$ \\ 
				\hline
			\end{tabular}
		\end{center}
	}
	\vspace{-0.5cm}
\end{table*}
}

%%%%%%%%%%%%%%%%%%%%%%%%%%%%%%%%%%%%%%%%
%%%%            SECTION             %%%%
%%%%%%%%%%%%%%%%%%%%%%%%%%%%%%%%%%%%%%%%

\vspace{-0.25cm}

\subsection{Source localization} \label{subsec:sourceLoc}

The goal of this experiment is to localize the source of a diffused signal. Consider a signal diffusion process over a Stochastic Block Model (SBM) graph, which consists of $N=50$ nodes equally divided into $5$ communities with the intra-community link probability $0.8$ and the inter-community link probability $0.2$. There exists a source node $n_{s}$ at each community for $n_s \in \{ n_{s_1}, \ldots, n_{s_5} \} \subset \{n_1,\ldots,n_{N}\}$, and the source signal is initialized as a Kronecker delta $\bbdelta_s = [\delta_1,\ldots, \delta_N]^\top \in \mathbb{R}^N$ with $\delta_s = 1$ at the source node $n_s$ and $\delta_i = 0$ at other nodes $i \ne s$. The diffused signal at time $t$ is given by $\bbx_{ts} = \bbS^t \bbdelta_s + \bbn$ where $\bbS$ is the normalized adjacency matrix and $\bbn \in \mathbb{R}^N$ is an additional Gaussian noise. There exists another detection node $n_d$ at each community for $n_d\in \{n_{d_1}, \ldots, n_{d_5}\} \subset \{n_1,\ldots,n_{N}\}$, at which we determine the source community of a given diffused signal distributively with local neighborhood information.

\myparagraph{Dataset.} We generate the dataset of $13,500$ signal-label samples. The signal is a diffused graph signal $\bbx_{ts}$ with randomly selected source node $n_s$ and diffused time $t$, and the label is the corresponding source community. The dataset is split into $10,000$ samples for training, $2,500$ samples for validation and $1,000$ samples for testing. 

\myparagraph{Parametrization.} We consider a WD-GNN consisting of a graph filter as the wide part and a two-layer GNN as the deep part. Both components have $G=32$ output features, and the deep part has $F_\ell=32$ features per layer. All filters are of order $K=5$ and the nonlinearity is the ReLU. A local readout layer follows to map $32$ output features to a $5$-dimensional vector at each node, indicating predicted probabilities of source communities. We train the WD-GNN for $100$ epochs with batch size of $50$ samples, using the ADAM optimizer \cite{Ba2010} with learning rate $\gamma = 5 \cdot 10^{-3}$ and forgetting factors $0.9$ and $0.999$, respectively. The values of $\alpha_{D}$, $\alpha_{W}$ and $\beta$ are learned from the training data during the offline phase. The performance is measured with the average classification accuracy of all detection nodes, i.e., how many times the source is correctly detected averaged over all detection nodes. Our results are averaged over ten random dataset generations.

\myparagraph{Online learning.} We perform distributed online learning for the graph filter and the WD-GNN at the testing phase. %Similar as movie recommendation, 
In particular, we assume each detection node $n_d$ gets feedback after predicting the source and uses the latter to compute the instantaneous local cost $J_{d, t}\big(\bbPsi(\bbX_t; \bbS_t, \ccalA_{d}, \ccalB^{\dag})\big)$ for $d = d_1,\ldots,d_5$ [cf. \eqref{eq:distvp}]. We consider a distributed online learning procedure experiencing $1,000$ testing signals, which retrains the wide part for each testing signal [cf. \eqref{eq:disgd}]. The combination parameters $\alpha_D$, $\alpha_W$ and $\beta$ are fixed during the online phase. Note that, due to the physical distribution of the nodes, the centralized online learning is not applicable in this scenario, and is therefore only considered as a benchmark.

\myparagraph{Performance.} We perform experiments from two aspects: training and testing on the same problem scenario, and training and testing on different problem scenarios. Table \ref{table_source} summarizes the results. For training and testing on the same scenario, the WD-GNN exhibits the best performance, the GNN follows in the second place, and the graph filter performs worst. This is because the WD-GNN has enhanced representational capability by combining the wide and deep parts while the graph filter is a linear model that cannot extract meaningful features. \blue{The online learning only obtains slight performance improvement for the WD-GNN and even suffers from slight performance degradation for the graph filter since the problem scenario remains the same from training to testing and the offline phase has already trained the models well.}  

For training and testing on different scenarios, we suppose the underlying graph is perturbed by external factors during testing such as channel fading effects, \blue{where edges may be lost with an outage probability $p = 0.3$ -- see details in Section \ref{subsec:stabilityandConvergence}}. The latter results in an edge-dropped subgraph implemented during testing that differs from the underlying graph considered during training. In this case, all architectures suffer from severe performance degradation as observed in Table \ref{table_source}. Both the centralized online learning and the distributed online learning improve performance significantly mitigating this issue. This validates the fact that the online learning successfully adapts the architecture to the perturbed scenario. \blue{The performance improvement of the online learning decreases from the WD-GNN to the graph filter because the latter can only model linear representations at the offline phase and has a limiting performance even with online learning.} It is also worth noting that, while the centralized online learning obtains larger improvements, it is not applicable in practice as it requires global information. \blue{We further compare performance with the SGNN \cite{Gao2020-Stochastic}, which assumes the graph dynamics during testing as the random edge sampling (RES) model and accounts for the latter during training\footnote{\blue{The SGNN reduces to the GNN when training and testing on the same scenario, i.e., the underlying graph without edge dropping.}}. The SGNN exhibits a comparable (slightly worse) performance to the WD-GNN with the distributed online learning. %and reduces to the GNN when training and testing on the same scenario, i.e., the underlying graph without edge dropping. 
However, the RES model cannot represent any graph dynamics and the latter dynamic information may not be available at hand during training, which restrict the application of the SGNN.}
	
%its application is limiting since the graph dynamics may not be modeled by the RES model and its information may not be available in practice.}

%%%%%%%%%%%%%%%%%%%%%%%%%%%%%%%%%%%%%%%%
%%%%            SECTION             %%%%
%%%%%%%%%%%%%%%%%%%%%%%%%%%%%%%%%%%%%%%%

\vspace{-0.25cm}

\subsection{Robot swarm control} \label{subsec:flocking}

\begin{table*}[t]
	{\small
		\begin{center}
			\caption{ \blue{Performance comparison on robot swarm control.}}
			\label{table_flocking}
			\begin{tabular}{|l|c|c|}
				\hline
				Architecture$/$Measurement & Total velocity variation & Final velocity variation \\ \hline
				Optimal controller & $52 (\pm 2)$ & $0.0035 (\pm 0.0001)$ \\ \hline
				GNN & $95(\pm 6)$ & $0.0153 (\pm 0.0030)$ \\ \hline
				Graph filter & $428 (\pm 105)$ & $ 1.8 (\pm 1.1)$ \\ \hline
				Graph filter w/ centralized online learning & $388 (\pm 100)$ & $1.3062 (\pm 0.8912)$ \\  \hline
				Graph filter w/ distributed online learning & $400 (\pm 98)$ & $1.4826 (\pm 0.8792)$ \\ 
				\hline
				WD-GNN &  $84 (\pm 5)$ & $0.0119 (\pm 0.0032)$ \\  \hline
				WD-GNN w/ centralized online learning & $79(\pm 4)$ & $0.0065 (\pm 0.0028)$ \\  \hline
				WD-GNN w/ distributed online learning & $82(\pm 4)$ & $0.0069 (\pm 0.0023)$ \\ 
				\hline
			\end{tabular}
		\end{center}
	}
	\vspace{-0.5cm}
\end{table*}

The goal of this experiment is to learn a decentralized controller that coordinates a team of robots to move together at the same velocity while avoiding collisions \cite{Gama2021-ControlGNN}. Consider $N$ robots initially moving at random velocities. At time $t$, each robot $n_i$ is described by its position $\bbp_{i,t} \in \mathbb{R}^2$, velocity $\bbv_{i,t} \in \mathbb{R}^2$ and controls its acceleration $\bbu_{i,t} \in \mathbb{R}^2$ towards the next state
\begin{equation}\label{eq:controlProcess}
	\bbp_{i,t+1} = \bbp_{i,t} + \bbv_{i,t} T_s + \frac{1}{2}\bbu_{i,t}T_s^2, ~ \bbv_{i,t+1} = \bbv_{i,t}+\bbu_{i,t}T_s
\end{equation}
where $T_s$ is the sampling time and $\bbu_{i,t}$ is assumed constant during the sampling time interval $[T_s t,T_s (t+1)]$ --see \cite{Gama2021-ControlGNN} for details. We aim to control accelerations $\bbU_t=[\bbu_{i,t}, \ldots, \bbu_{N,t}]^\top \in \mathbb{R}^{N \times 2}$ such that robots with random initial velocities will ultimately move at the same velocity without collision. This problem has a centralized controller that can be readily computed as \cite{Gama2021-ControlGNN}
\begin{equation}\label{eq:optimalController}
	\bbu_{i,t}^* = - \sum_{j=1}^N \left(\bbv_{i,t} - \bbv_{j,t} \right) - \sum_{j=1}^N \nabla_{\bbp_{i,t}} V\left( \bbp_{i,t}, \bbp_{j,t} \right)
\end{equation}
for all $i=1,\ldots,N$ with $V( \bbp_{i,t}, \bbp_{j,t})$ the collision avoidance potential. However, the computation of such a solution $\bbu_{i,t}^*$ requires the knowledge of positions $\{ \bbp_{i,t} \}_{i=1}^N$ and velocities $\{ \bbv_{i,t} \}_{i=1}^N$ of all robots over network and thus demands a centralized computation unit. The latter may not be available in practice, especially for large-scale networks \cite{Gama2021-DistributedLQR}.

In the decentralized setting, each robot only has access to local neighborhood information. We assume robot $n_i$ can communicate with robot $n_j$ and obtain its information if and only if they are within the communication radius $r$, i.e., there is a communication link $(i,j)$ if $\| \bbp_{i,t}-\bbp_{j,t} \| \le r$ at time $t$. We establish the communication graph $\ccalG_t$ with the node set $\ccalV = \{ n_1,\ldots,n_N \}$ and the edge set $\ccalE_t$ containing available communication links. The support matrix $\bbS_t$ is the adjacency matrix with entry $[\bbS_t]_{ij} =1$ if $(i,j) \in \ccalE_t$ and $[\bbS_t]_{ij} =0$ otherwise. We apply the WD-GNN to learn a decentralized controller $ \bbU_t = \bbPsi(\bbX_t; \bbS_t, \ccalA, \ccalB)$ where the graph signal $\bbX_t\!=\![ \bbx_{1,t}, \ldots, \bbx_{N,t}]^\top \!\in\! \mathbb{R}^{N \times 6}$ is
\begin{equation}\label{eq:relevantFeature}
	\begin{aligned}
		\bbx_{i,t} \!= &\Big[\!\sum_{j:n_j\in \ccalN_{i,t}}\!\!\!\!\!\!\big(\bbv_{i,t} \!-\! \bbv_{j,t} \big), \sum_{j:n_j\in \ccalN_{i,t}}\frac{\bbp_{i,t}-\bbp_{j,t}}{\| \bbp_{i,t}-\bbp_{j,t} \|^4}, \\
		& \quad \quad \quad \quad \quad \quad \quad \quad \quad  \sum_{j:n_j\in \ccalN_{i,t}}\frac{\bbp_{i,t}-\bbp_{j,t}}{\| \bbp_{i,t}-\bbp_{j,t} \|^2}\Big]
	\end{aligned}
\end{equation}
for all $i=1,\ldots,N$, which is a local feature vector collecting relative position and velocity information of neighboring robots \cite{Gama2021-ControlGNN}. \blue{In this case, both the robot states $\bbX_t$ and the communication graph $\bbS_t$ change at each time $t$, i.e., the data rate coincides the communication rate. Multi-hop communications may not be applicable w.r.t. each $\bbX_t$ for the distributed implementation -- see Remark \ref{remark:distributed}, and 
%With time-vary communication graphs $\{ \ccalG_t \}_t$, 
the graph filter in \eqref{eqn:graphConv} is adapted to the delayed information structure $\{\bbX_t, \bbS_t\}_t$ as %that allows for the distributed implementation as %account for changing graph shift operators $\{ \bbS_t \}_t$ as
\begin{equation} \label{eqn:delayedGraphFilter}
	\bbA(\bbX_t; \bbS_t, \ccalA) = \sum_{k = 0}^{K} \bbS_t \bbS_{t-1} \cdots \bbS_{t-(k-1)} \bbX_{t-k} \bbA_{k}.
\end{equation}
%}
}We leverage the imitation learning framework \cite{Ross10-ImitationLearning} to train the WD-GNN distributed controller by imitating the expert centralized controller \eqref{eq:optimalController}.

\myparagraph{Dataset.} The dataset contains $400$ trajectories for training, $40$ for validation and $40$ for testing. We generate each trajectory by initially positioning $N=50$ robots randomly in a circle. The minimal initial distance between two robots is $0.1$m and initial velocities are sampled randomly from $[-v, +v]^2$ with $v=3$m/s. The duration of trajectories is $T=2$s with the sampling time $T_s=0.01$s, the maximum acceleration is $\pm 10 \text{m}/{\text{s}^2}$, and the communication radius is $r=2$m.

\myparagraph{Parametrization.} For the WD-GNN, we consider the wide part as a graph filter and the deep part as a single-layer GNN, where both have $G=32$ output features. All filters are of order $K=3$ and the nonlinearity is the Tanh. The output features are fed into a local readout layer to generate two-dimensional acceleration $\bbu_{i,t}$ at each robot $n_i$. We train the WD-GNN for $30$ epochs with batch size of $20$ samples, using the ADAM optimizer \cite{Ba2010} with learning rate $\gamma = 5 \cdot 10^{-4}$ and forgetting factors $0.9$ and $0.999$, respectively. The values of $\alpha_{D}$, $\alpha_{W}$ and $\beta$ are learned from the training data during the offline phase. We average experimental results for $5$ dataset realizations. %\red{(What's the performance measure here?)}

\myparagraph{Measurements.} The flocking condition is quantified by the variance of robot velocities, referred to as \emph{velocity variation}. Specifically, we measure the performance of the learned controller from two aspects: the total velocity variation over the whole trajectory $\sum_{t=1}^D \sum_{i=1}^N \big\| \bbv_{i,t} - \sum_{i=1}^N \bbv_{i,t}/N \big\|^2/N$ and the final velocity variation $\sum_{i=1}^N \big\| \bbv_{i,D} - \sum_{i=1}^N \bbv_{i,D}/N \big\|^2/N$ at the last time index $D=T/T_s$. The former reflects the whole controlling process which decreases if robots approach the consensus more quickly, while the latter tells how well the final flocking condition is achieved.

\myparagraph{Online learning.} We perform both the centralized and the distributed online learning for the graph filter and the WD-GNN during testing. The former uses the velocity variation over all robots as the instantaneous cost in \eqref{eq:timevaryingp}, which requires velocities of all robots and is not practical. The latter uses the velocity variance over neighboring robots as the instantaneous local cost in \eqref{eq:distvp}, which leverages neighborhood information and can be implemented distributively. The combination parameters $\alpha_D$, $\alpha_W$ and $\beta$ are fixed during the online phase.
%
%\vspace*{-\baselineskip}
\begin{table*}[t]\small
	\caption{\blue{Performance of the WD-GNN with different filter orders $K$ and different numbers of features $F$.}} % title name of the table
	\centering % centering table
	\begin{tabular}{|c | c| p{1cm} p{1cm} p{1cm}|p{2.2cm} p{2.2cm} p{2.2cm}|} % creating 10 columns
		\hline % inserting double-line
		& Measurement &  \multicolumn{3}{|c|}{Total velocity variation} &  \multicolumn{3}{|c|}{Final velocity variation}   \\
		% inserting double-line
		Architecture &  & ~F=16 & ~F=32 & ~F=48& ~~~~~~F=16 & ~~~~~~F=32 & ~~~~~~F=48
		\\
		\hline % inserts single-line
		% Entering 1st row
		& WD-GNN &$115 (\pm 9)$ & $90 (\pm 8)$ & $85 (\pm 5)$& $0.0549(\pm 0.0233)$ & $0.0206 (\pm 0.0090)$ & $0.0132 (\pm 0.0031)$  \\
		K=2 & Centralized online learning
		& $108 (\pm 9)$ & $86 (\pm 7)$ & $80 (\pm 5)$ & $0.0270 (\pm 0.0193)$ & $0.0063 (\pm 0.0030)$ & $0.0036 (\pm 0.0022)$ \\& Distributed online learning & $111 (\pm 8)$ & $88 (\pm 8)$ & $83 (\pm 5)$ & $0.0329 (\pm 0.0177)$ & $0.0121 (\pm 0.0065)$ & $0.0088 (\pm 0.0026)$  \\ \hline
		% Entering 2nd row
		& WD-GNN & $112 (\pm 11)$ & $84 (\pm 5)$ & $86 (\pm 3)$& $0.0395 (\pm 0.0251)$ & $0.0119 (\pm 0.0032)$ &$0.0109 (\pm 0.0049)$  \\
		K=3 & Centralized online learning
		& $107 (\pm 9)$ & $79 (\pm 4)$ & $81 (\pm 3)$ & $0.0229 (\pm 0.0179)$ & $0.0065 (\pm 0.0028)$ & $0.0059 (\pm 0.0025)$  \\& Distributed online learning
		& $109 (\pm 10)$ & $82 (\pm 4)$ & $84 (\pm 3)$& $0.0313 (\pm 0.0225)$ & $0.0069 (\pm 0.0023)$ & $0.0077 (\pm 0.0041)$  \\ \hline
		% Entering 3rd row
		& WD-GNN &$116 (\pm 10)$ & $88 (\pm 6)$ & $83 (\pm 3)$& $0.0494 (\pm 0.0156)$ & $0.0132 (\pm 0.0035)$ & $0.0108 (\pm 0.0028)$  \\
		K=4 & Centralized online learning
		& $107 (\pm 10)$ & $83 (\pm 6)$ & $79 (\pm 2)$& $0.0176 (\pm 0.0118)$ & $0.0058 (\pm 0.0022)$ & $0.0027 (\pm 0.0017)$  \\& Distributed online learning
		& $112 (\pm 9)$ & $85 (\pm 6)$ & $81 (\pm 2)$& $0.0299 (\pm 0.0116)$ & $0.0065 (\pm 0.0023)$ & $0.0056 (\pm 0.0021)$  \\
		% [1ex] adds vertical space
		\hline % inserts single-line
	\end{tabular}
	\label{tab:ComparisonDropEdge}
	\vspace{-0.5cm}
\end{table*}

\myparagraph{Performance.} Since the robot swarm system initializes robot conditions randomly and evolves dynamically across time, problem scenarios are typically different between training and testing. Besides the GNN and the graph filter, we also compare the optimal controller [cf. \ref{eq:optimalController}] for reference. Table \ref{table_flocking} shows the results. We see that the WD-GNN exhibits the best performance in both performance measures (except the optimal centralized controller). This can be explained by the increased representational capacity of the WD-GNN as a combined architecture. The GNN takes the second place, while the graph filter performs much worse since the optimal controller is known to be nonlinear \cite{Witsenhausen68-Counterexample}. The online learning procedure reduces both the total and final velocity variations, which implies that it successfully adapts the architectures to the changing initial conditions and communication graphs. The reduction in the final velocity variation is more noticeable, since the effect of single-time online gradient updates [cf. \eqref{eq:disgd}] gets compounded as the trajectory proceeds. \blue{Moreover, while the online learning of the graph filter obtains larger variation reductions, its performance is too bad for consideration because the graph filter can only learn linear relations between the robot states and the action policy at the offline phase.

In addition, we evaluate the WD-GNN with the online learning under different architecture hyper-parameters, i.e., different numbers of features $F$ and different filter orders $K$. The online learning reduces the total and final velocity variations for all architectures, where the centralized one outperforms but requires global information. In terms of the hyper-parameter effects, the expected performance improves and the standard deviation decreases with the number of features $F$ but not necessarily the filter order $K$. 
%from $16$ to $32$ or the filter order $K$ from $2$ to $3$. For architectures with $F \ge 32$ and $K \ge 3$, the performance fluctuates up and down 
%the number of features $F$ and the standard deviation decreases with the filter order $K$. 
While the optimal controller outperforms the WD-GNN, the latter only needs local neighborhood information for implementation and the final velocity variation of the centralized online learning reduces to a comparable value to the optimal controller in certain hyper-parameter settings. %While the WD-GNN is outperformed by the optimal controller, it only requires local neighborhood information in practical implementation.
}

%%%%%%%%%%%%%%%%%%%%%%%%%%%%%%%%%%%%%%%%
%%%%            SECTION             %%%%
%%%%%%%%%%%%%%%%%%%%%%%%%%%%%%%%%%%%%%%%

\subsection{Movie recommendation} \label{subsec:movie}

The goal of this experiment is to predict the rating a user would give to a specific movie, based on the ratings that user -- and other users -- have given to some other collection of movies \cite{Huang2018}. We build the underlying graph as the movie similarity network, where nodes are movies and edge weights are similarity ratings between movies. The graph signal contains the ratings of movies given by a user, with missing values if those movies are not rated by that user. We train the WD-GNN to predict the rating of a movie of our choice, based on ratings given to other movies.

\myparagraph{Dataset.} We consider a subset of MovieLens-100k dataset \cite{Harper2016}, containing $943$ users and $400$ movies with largest number of ratings. We compute the movie similarity as the Pearson correlation and keep the ten edges with highest similarity for each node (movie) --see \cite{Huang2018} for details. The ratings given by each user are modeled as a graph signal, where the value on each node is the rating given to the corresponding movie, or zero if that movie has not been rated. The dataset is split into $90 \%$ for training and $10\%$ for testing. The objective is to estimate the rating of the movie `\emph{Star Wars}' because it is the one with the largest number of ratings given by the users.

\myparagraph{Parametrization.} We consider a WD-GNN whose wide part is a graph filter and deep part is a single-layer GNN. Both components have $G=64$ output features, filters are of order $K=5$, and the nonlinearity is the ReLU. A local readout layer follows to map $64$ output features to a scalar predicted rating at each node. We train the WD-GNN for $30$ epochs with batch size of $5$ samples, using the ADAM optimizer \cite{Ba2010} with learning rate $\gamma = 5 \cdot 10^{-3}$ and forgetting factors $0.9$ and $0.999$, respectively. The values of $\alpha_{D}$, $\alpha_{W}$ and $\beta$ are learned from the training data during the offline phase. The performance is measured with the root mean squared error (RMSE), averaged over $10$ random dataset splits. The estimated standard deviation is also shown.

\myparagraph{Online learning} We perform online learning for the graph filter and the WD-GNN during testing. Similarly as source localization, we assume the recommendation system gets feedback from the user after it predicted the rating. This feedback is used as the label to compute the instantaneous loss function $J_t\big(\bbPsi(\bbX_t; \bbS_t, \ccalA, \ccalB^{\dag})\big)$ [cf. \eqref{eq:timevaryingp}]. We consider an online learning procedure experiencing $400$ testing users and for each user, the system performs gradient descent [cf. \eqref{eq:onlinelearning}] to retrain the wide part based on the instantaneous cost. The combination parameters $\alpha_D$, $\alpha_W$ and $\beta$ are fixed during the online phase. While, in practice, centralized online learning is applicable for recommendation systems, we keep in mind that we are concerned with distributed online execution of the WD-GNN --see Sections \ref{subsec:sourceLoc} and \ref{subsec:flocking} for other examples where this assumption has a physical justification.

\myparagraph{Performance.} We similarly perform experiments from two aspects: training and testing on the same movie, and training on one movie and testing on a different one. The second experiment exemplifies a case where the problem scenario significantly changes from training to testing. Table \ref{table_recommendation} shows the results. We see that three architectures exhibit comparable performance, while the WD-GNN performs best with the lowest RMSE. We attribute this behavior to the enhanced learning ability of the WD-GNN as a combined architecture. The online learning only improves performance slightly %\blue{for the WD-GNN and even gets slight performance degradation for the graph filter} 
because the problem scenario does not change and the offline phase of the WD-GNN seems to have already captured the main challenges of the problem.

For training on one movie and testing on a different one, we choose to train on \emph{Star Wars}, but test on \emph{Contact} and \blue{\emph{Return of Jedi}}. The results are also summarized in Table \ref{table_recommendation}. As expected, all three architectures experience performance degradation due to the change of the problem scenario. However, leveraging online learning improves the performance significantly as it successfully adapts the architectures to the new scenario. \blue{The online learning of the graph filter achieves a comparable performance to that of the WD-GNN in this experiment because the linear representations of the graph filter solve the problem as well as the nonlinear ones of the WD-GNN, but it is worth remarking that this is not necessarily the case for other applications -- see Sections \ref{subsec:sourceLoc} - \ref{subsec:flocking}.
%graph filter has already solved the problem well.
}%We remark that these improvements will be emphasized, i.e., RMSE will keep reducing, as the testing phase further goes on with more users involved. 

\begin{table*}[t]
	{\small
		\begin{center}
			\caption{\blue{Performance comparison on movie recommendation.}}
			\label{table_recommendation}
			\begin{tabular}{|l|c|p{2.5cm} p{2.5cm}|}
				\hline
				 & Train $\&$ test on same movie &  \multicolumn{2}{|c|}{Train on one movie $\&$ test on another} \\
				Architecture$/$Experiments & Star War & ~~~~~~Contact  & ~~Return of Jedi  \\ \hline
				GNN & $0.8630(\pm 0.0884)$ & $1.0889 (\pm 0.1106)$ & $1.0072 (\pm 0.1039)$\\ \hline
				Graph filter & $0.8589 (\pm 0.0895)$ & $ 1.0950 (\pm 0.1129)$ & $0.9924 (\pm 0.1005)$\\ \hline
				WD-GNN &  $0.8535 (\pm 0.0883)$ & $1.0920 (\pm 0.1053)$ & $0.9986 (\pm 0.0930)$\\ \hline
				Graph filter w/ online learning & $0.8595 (\pm 0.0865)$  & $0.9837 (\pm 0.0898)$ &  $0.9486 (\pm 0.0755)$\\  \hline
				WD-GNN w/ online learning & $0.8524(\pm 0.0907)$ & $0.9759 (\pm 0.0927)$ & $0.9232 (\pm 0.0664)$ \\  \hline
			\end{tabular}
		\end{center} 
	}
	\vspace{-0.5cm}
\end{table*}

\subsection{Stability and Convergence Corroboration}\label{subsec:stabilityandConvergence}

The goal of this experiment is to corroborate the stability analysis in Theorem \ref{thm:stabilityWDGNN} and the convergence analysis in Theorems \ref{thm:convergence}-\ref{thm:convergenceDistributed}. We consider the problem of source localization and follow the experimental setting in Section \ref{subsec:sourceLoc}. 

\myparagraph{Stability corroboration.} We consider the WD-GNN is trained on the underlying graph $\bbS$ but implemented on the perturbed graph $\hat{\bbS}$ during testing. \blue{In particular, we suppose all edges of the underlying graph may fall with a probability $p$, due to external factors such as channel fading, object blocking and human effects. The latter results in an edge-dropped subgraph during testing that differs from the underlying graph during training.} Fig. \ref{fig:stability} shows the performance difference of the WD-GNN induced by the graph perturbation, where different edge dropping probabilities $p$ imply different severity of the perturbation. We see little performance degradation when $p$ is close to zero, which indicates the WD-GNN maintains performance when the number of dropping edges is small and the graph perturbation is mild. The classification accuracy decreases slightly as $p$ becomes larger and more edges get dropped. The results corroborate the stability analysis present in Theorem \ref{thm:stabilityWDGNN}.

\myparagraph{Convergence corroboration.} We consider the WD-GNN is trained on the underlying graph and tested on the edge-dropped subgraph with the edge dropping probability $p=0.3$. In this scenario, the perturbation is significant and the online learning is leveraged to mitigate the performance degradation. The online learning procedure experiences $1000$ testing signals, each of which retrains the wide part with gradient descent. Fig. \ref{fig:convergence} shows the convergence process. In particular, the classification accuracy increases as the testing phase proceeds, i.e., as the number of testing signals increases, leading to a convergent result in both centralized and distributed cases. The centralized online learning converges fast but is not practical, while the distributed online learning achieves comparable performance and can be implemented in the decentralized setting. The results corroborate the convergence analysis present in Theorems \ref{thm:convergence}-\ref{thm:convergenceDistributed}.

\begin{figure}[t]
	\centering
	\begin{subfigure}{.24\textwidth}
		\includegraphics[width=\textwidth]{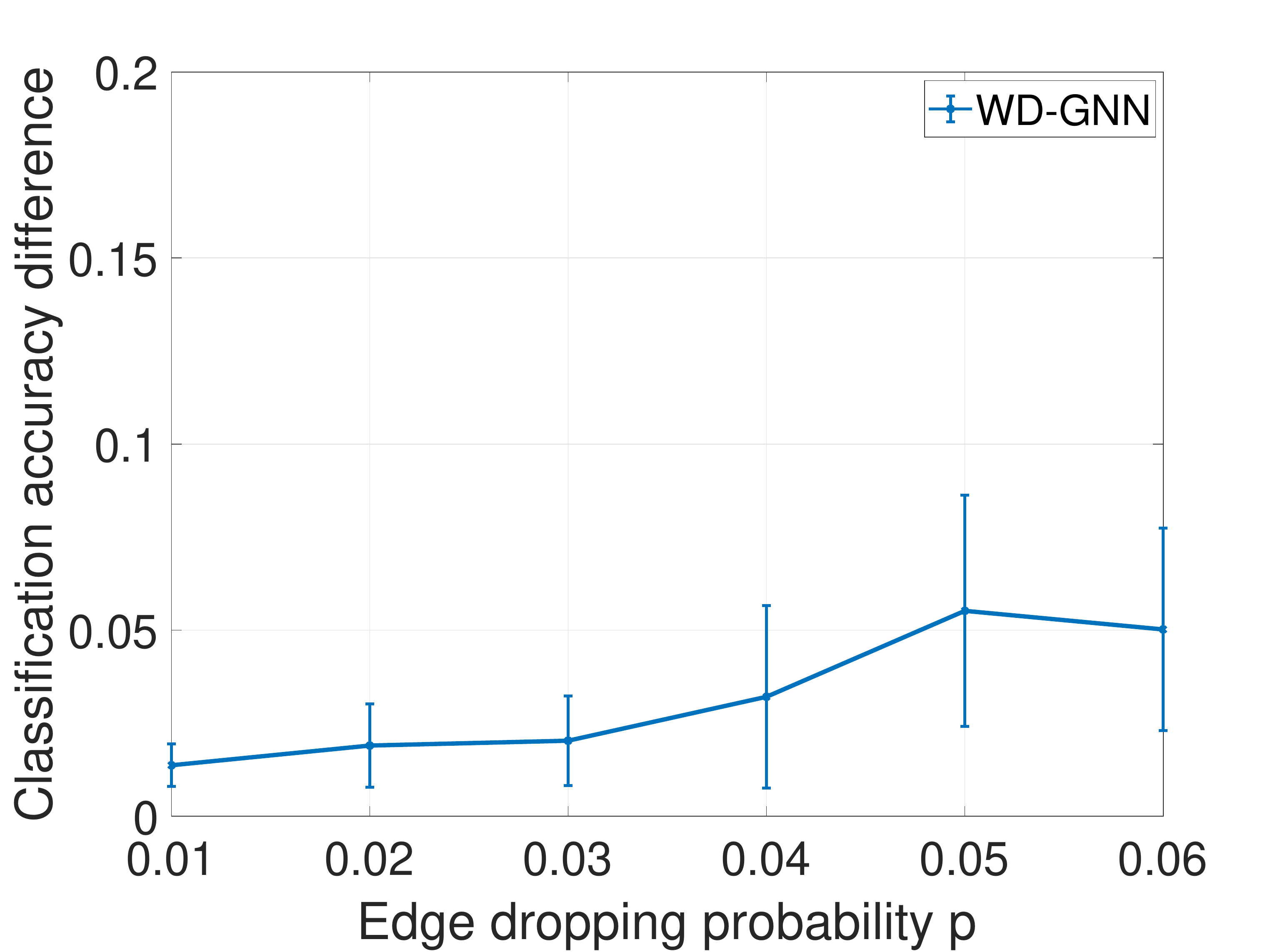}
		\caption{}
		\label{fig:stability}
	\end{subfigure}
	%%%%%%%%%%%%%%
	\begin{subfigure}{.24\textwidth}
		\includegraphics[width=\textwidth]{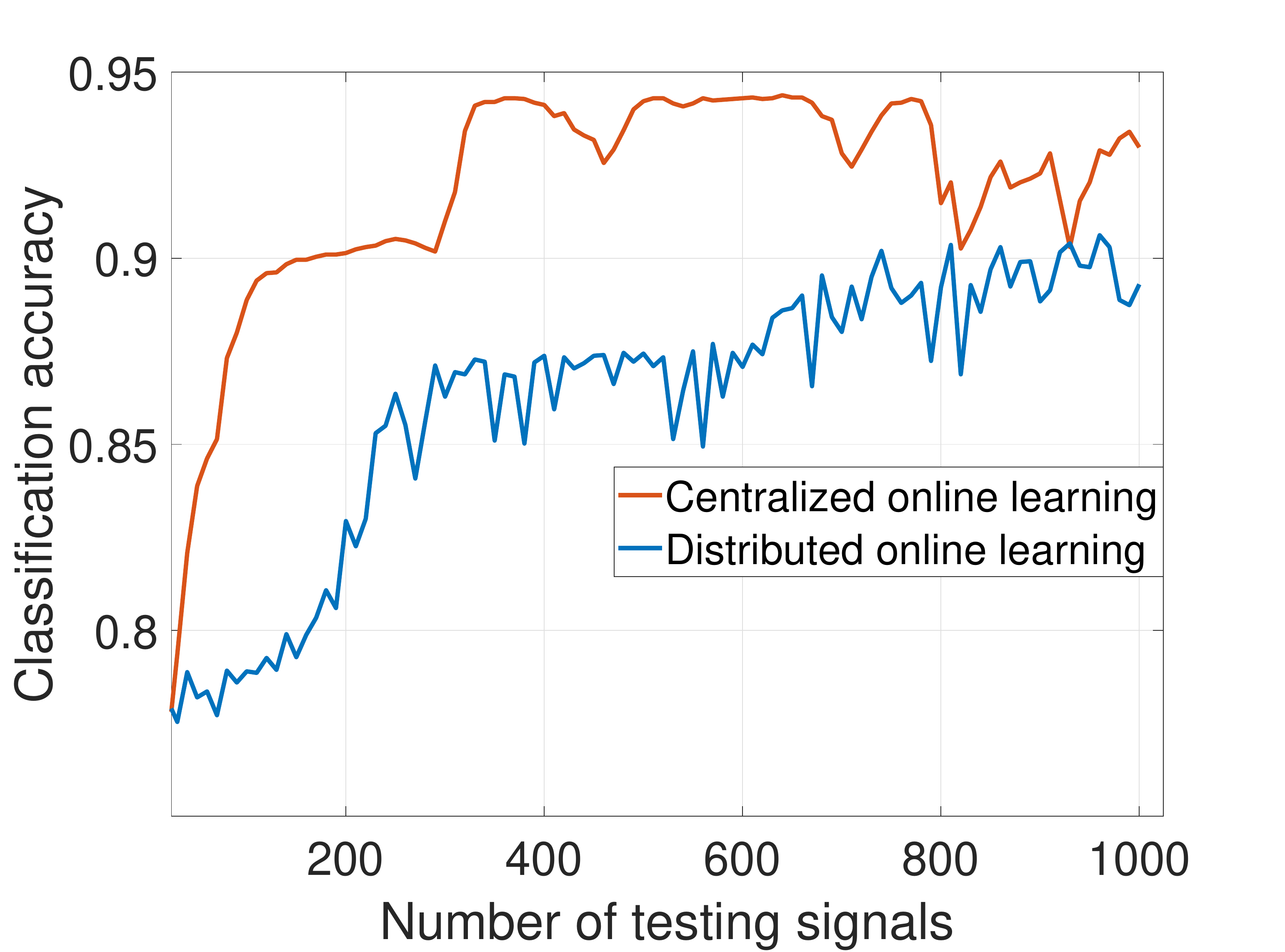}
		\caption{}
		\label{fig:convergence}
	\end{subfigure}
	%%%%%%%%%%%%%%
	\caption{\small (a) Performance difference under different edge dropping probabilities $p$. (b) Convergence process of online learning.\vspace{-0.5cm}}
	\label{fig:stabilityandConvergence}
\end{figure}  
%%%%%%%%%%%%%%%%%%%%

%%%%%%%%%%%%%%%%%%%%%%%%%%%%%%%%%%%%%%%%%%%%%%%%%%%%%%%%%%%%%%%%%%%%%%%%%%%%%%%%
%%%%                          SECTION : Conclusions                         %%%%
%%%%%%%%%%%%%%%%%%%%%%%%%%%%%%%%%%%%%%%%%%%%%%%%%%%%%%%%%%%%%%%%%%%%%%%%%%%%%%%%

\section{Conclusions} \label{sec:conclusions}

%!TEX root = 00-wideDeep.tex

%%%%%%%%%%%%%%%%%%%%%%%%%%%%%%%%%%%%%%%%%%%%%%%%%%%%%%%%%%%%%%%%%%%%%%%%%%%%%%%%
%%%%                                                                        %%%%
%%%%                          SECTION : Conclusions                         %%%%
%%%%                                                                        %%%%
%%%%%%%%%%%%%%%%%%%%%%%%%%%%%%%%%%%%%%%%%%%%%%%%%%%%%%%%%%%%%%%%%%%%%%%%%%%%%%%%
%%%% sec:conclusions %%%%
%%%%%%%%%%%%%%%%%%%%%%%%%

This paper proposed the Wide and Deep Graph Neural Network as a joint architecture comprising of a linear graph filter (wide part) and a nonlinear GNN (deep part). To alleviate performance degradation induced by the change of the problem scenario between training and testing, the learning process of the WD-GNN consists of two phases. The offline phase trains the wide and deep parts jointly to learn nonlinear representations from the training data. The online phase fixes the deep part and only retrains the wide part adapting to the testing data, which results in solving a (typically convex) time-varying optimization problem. The latter allows for efficient online learning with provable convergence guarantees. We further developed a distributed online learning algorithm, which can be implemented in a decentralized setting and corresponds with the distributed nature of the WD-GNN. The stability analysis shows the architecture robustness to mild graph perturbations, and the convergence analysis evaluates the efficiency of the distributed online learning algorithm. Numerical experiments are performed on movie recommendation, source localization and robot swarm control, corroborating the effectiveness of the proposed WD-GNN with distributed online learning. \blue{In near future, we plan to corroborate these findings with other datasets and in other applications, such as the MovieLens1M dataset for movie recommendation and the motion planning problem in multi-agent systems.}
	%with other topology variation models and in other distributed applications, such as power outage prediction in smart grids.}

%%%%%%%%%%%%%%%%%%%%%%%%%%%%%%%%%%%%%%%%%%%%%%%%%%%%%%%%%%%%%%%%%%%%%%%%%%%%%%%%
%%%%                               APPENDIX                                 %%%%
%%%%%%%%%%%%%%%%%%%%%%%%%%%%%%%%%%%%%%%%%%%%%%%%%%%%%%%%%%%%%%%%%%%%%%%%%%%%%%%%

\ifundefined{arXiv}
    \appendices
\else
    \appendix
\fi

%!TEX root = 00-wideDeep.tex

\section{Proof of Theorem 1} \label{Appendix:proofOfTheorem1}

\begin{proof}
The output difference between $\bbPsi(\bbX; \bbS, \ccalA, \ccalB)$ and $\bbPsi(\bbX; \hbS, \ccalA, \ccalB)$ can be divided into the wide part difference and the deep part difference as
 \begin{align}\label{eqn:thm100}
&\|\bbPsi(\bbX;\! \bbS, \ccalA,\! \ccalB)\!-\!\bbPsi(\bbX; \!\hbS, \ccalA,\! \ccalB)\| \\
&\le\! \|\bbA(\bbX;\! \bbS,\! \ccalA) \!-\! \bbA(\bbX;\! \hat{\bbS},\! \ccalA)\| \!+\! \|\bbPhi(\bbX;\! \bbS,\! \ccalB)\!-\!\bbPhi(\bbX;\! \hbS,\! \ccalB)\| \nonumber
\end{align}
where the triangle inequality is used. Let us consider these two terms separately.

\textbf{The wide part difference.} By using Lemma \ref{lemma1}, we bound the wide part difference as 
 \begin{align} \label{eqn:thm11}
    \| \bbA(\bbX; \bbS,\! \ccalA) \!-\! \bbA(\bbX; \hat{\bbS},\! \ccalA)\| &\!\le\! 2C_L(1 \!+\! \sigma \sqrt{N}) \eps \!+\! \ccalO(\eps^2) \\
    &\le 2C_L(1 + 8 \sqrt{N}) \eps + \ccalO(\eps^2) \nonumber
\end{align}
where $\sigma \le 8$ is used in the last inequality since $\|\bbU\| = \|\bbV\|=1$ in \eqref{eq:lemma11}.  
 
\textbf{The deep part difference.} At layer $\ell$, the graph convolution [cf. \eqref{eqn:graphConv}] can be considered as the application of $F_{\ell-1}F_\ell$ one-dimensional input and one-dimensional output filters, i.e.,
\begin{equation}\label{eqn:thm12} 
 \left[\sum_{k=0}^{K} \bbS^{k} \bbX_{\ell-1} \bbB_{\ell k}\right]_f = \sum_{g=1}^{F_{\ell-1}} \sum_{k=0}^K [\bbB_{\ell k}]_{fg} \bbS^k [\bbX_{\ell-1}]_{g} 
\end{equation}
for $f=1,\ldots, F_\ell$, where $[\bbB_{\ell k}]_{fg}$ is the $(f,g)$th entry of matrix $\bbB_{\ell k}$ and $[\cdot]_f$ represents the $f$th column. We denote by $\bbG_\ell^{fg}(\bbS)[\bbX_{\ell-1}]_{g}= \sum_{k=0}^K [\bbB_{\ell k}]_{fg} \bbS^k [\bbX_{\ell-1}]_{g}$ and $\bbx^g_{\ell-1} = [\bbX_{\ell-1}]_{g}$ for convenience of following derivations. By using \eqref{eqn:thm12} and substituting the GNN architecture [cf. \eqref{eqn:GCNN}] into the deep part difference, we get
 \begin{align} \label{eqn:thm13}
&\|\bbPhi(\bbX;\! \bbS,\! \ccalB)\!-\!\bbPhi(\bbX;\! \hbS,\! \ccalB)\| \\
&= \Big\| \sigma\Big( \sum_{f=1}^{F_{L-1}} \bbG_L^{1f}(\bbS)\bbx^f_{L-1} \Big) - \sigma\Big( \sum_{f=1}^{F_{L-1}} \bbG_L^{1f}(\hbS)\hat{\bbx}^f_{L-1} \Big) \Big\| \nonumber\\
&\le \sum_{f=1}^{F_{L-1}}  \big\| \bbG_L^{1f}(\bbS)\bbx^f_{L-1} - \bbG_L^{1f}(\hbS)\hat{\bbx}^f_{L-1} \big\| \nonumber
\end{align}
where $\hat{\bbx}_{L-1}^f$ is the $f$th feature of the $(L-1)$th layer output when acting on $\hat{\bbS}$ instead of $\bbS$, and where the second inequality is because of the normalized Lipschitz nonlinearity $\sigma$ and the triangle inequality. By adding and substracting $\bbG_L^{1f}(\hbS)\bbx^f_{L-1}$, the term in the norm is bounded by
 \begin{align} \label{eqn:thm14}
&\big\| \bbG_L^{1f}\!(\bbS)\bbx^f_{L\!-\!1} \!\!-\! \bbG_L^{1f}\!(\hbS)\bbx^f_{L\!-\!1} \!\big\| \!+\!\! \big\|\bbG_L^{1f}\!(\hbS)\bbx^f_{L\!-\!1} \!\!-\! \bbG_L^{1f}\!(\hbS)\hat{\bbx}^f_{L\!-\!1} \!\big\| \nonumber \\
& \!\!\le \!\!\big\| \bbG_L^{1f}\!(\bbS)\bbx^f_{L\!-\!1} \!\!-\!\! \bbG_L^{1f}\!(\hbS)\bbx^f_{L\!-\!1} \!\big\| \!\! +\!\! \| \bbG_L^{1f}\!(\hbS) \|\! \big\|\bbx^f_{L\!-\!1} \!\!\!-\! \hat{\bbx}^f_{L\!-\!1}\! \big\|.
\end{align}
For the first term in \eqref{eqn:thm14}, by using Lemma \ref{lemma1} and \eqref{eqn:thm11}, we get
\begin{align} \label{eqn:thm15}
&\big\| \bbG_L^{1f}\!(\bbS)\bbx^f_{L-1} - \bbG_L^{1f}(\hbS)\bbx^f_{L-1} \big\| \\
& \le 2C_L(1 + 8 \sqrt{N}) \|\bbx^f_{L-1}\| \eps + \ccalO(\eps^2).\nonumber
\end{align}
In terms of the term $\|\bbx^f_{L\!-\!1}\|$, we observe that 
\begin{align} \label{eqn:thm16}
&\|\bbx^f_{L\!-\!1}\| = \Big\| \sigma\Big( \sum_{g=1}^{F_{L-2}} \bbG_{L-1}^{fg}(\bbS)\bbx^g_{L-2} \Big) \Big\| \\
&\le  \sum_{g=1}^{F_{L-2}} \big\| \bbG_{L-1}^{fg}(\bbS)\bbx^g_{L-2} \big\| \le \sum_{g=1}^{F_{L-2}} \big\| \bbx^g_{L-2} \big\|\nonumber
\end{align}
where we use the triangle inequality, followed by the bound on filters, i.e., the filter frequency response $|b^{fg}_{L-1}(\lambda)| = \big| \sum_{k=0}^K [\bbB_{(L-1) k}]_{fg} \lambda^k \big| \le 1$. We follow this recursion to obtain $\|\bbx^f_{L\!-\!1}\| \le \prod_{\ell = 1}^{L-2} F_\ell \big\| \bbx^1_0 \big\|$, 
%\begin{equation} \label{eqn:thm17}
%\|\bbx^f_{L\!-\!1}\| \le \prod_{\ell = 1}^{L-2} F_\ell \big\| \bbx^1_0 \big\|
%\end{equation}
where $\| \bbx^1_0 \| = \| \bbX \|$ by definition since the dimension of the input feature is $F_0=F=1$. By substituting this result into \eqref{eqn:thm15}, we have
\begin{align} \label{eqn:thm18}
&\big\| \bbG_L^{1f}\!(\bbS)\bbx^f_{L-1} - \bbG_L^{1f}\!(\hbS)\bbx^f_{L-1} \big\| \\
&\le  2C_L(1 + 8 \sqrt{N})\prod_{\ell = 1}^{L-2} F_\ell \big\| \bbX \big\| \eps + \ccalO(\eps^2). \nonumber
\end{align}
For the second term in \eqref{eqn:thm14}, by again using the filter bound, we get
\begin{equation} \label{eqn:thm19}
\| \bbG_L^{1f}\!(\hbS) \| \big\|\bbx^f_{L\!-\!1} \!-\! \hat{\bbx}^f_{L\!-\!1} \big\| \le \big\|\bbx^f_{L\!-\!1} \!-\! \hat{\bbx}^f_{L\!-\!1} \big\|.
\end{equation}
By substituting \eqref{eqn:thm18} and \eqref{eqn:thm19} into \eqref{eqn:thm14} and the latter into \eqref{eqn:thm13}, we have
 \begin{align} \label{eqn:thm110}
&\|\bbPhi(\bbX;\! \bbS,\! \ccalB)\!-\!\bbPhi(\bbX;\! \hbS,\! \ccalB)\| \\
&\le\! \sum_{f=1}^{F_{L-1}}\!\big\|\bbx^f_{L\!-\!1} \!-\! \hat{\bbx}^f_{L\!-\!1} \big\| \!+\! 2C_L(1 \!+\! 8 \sqrt{N})\!\prod_{\ell = 1}^{L-1} F_\ell \big\| \bbX \big\| \eps \!+\! \ccalO^2(\eps).\nonumber
\end{align}
From \eqref{eqn:thm110}, we observe that the output difference of the $L$th layer depends on that of the $(L-1)$th layer. Repeating this recursion until the input layer, we have
 \begin{align} \label{eqn:thm111}
&\|\bbPhi(\bbX; \bbS, \ccalB)-\bbPhi(\bbX; \hbS, \ccalB)\| \\
&\le 2C_L(1 + 8 \sqrt{N}) L \prod_{\ell = 1}^{L-1} F_\ell \big\| \bbX \big\| \eps + \ccalO^2(\eps) \nonumber
\end{align}
where the initial condition $\| \bbx_0^1 - \hat{\bbx}_0^1 \| = \| \bbX - \bbX \|=0$ is used.

By substituting \eqref{eqn:thm11} and \eqref{eqn:thm111} into \eqref{eqn:thm100}, we get
 \begin{align}\label{eqn:thm112}
&\|\bbPsi(\bbX;\! \bbS, \ccalA,\! \ccalB)\!-\!\bbPsi(\bbX; \!\hbS, \ccalA,\! \ccalB)\| \\
&\le\! 2C_L(1 + 8 \sqrt{N}) \left(\! |\alpha_W| \!+\! |\alpha_D| L \prod_{\ell = 1}^{L-1} F_\ell \!\right)\! \big\| \bbX \big\| \eps \!+\! \ccalO^2(\eps)\nonumber
\end{align}
completing the proof.
\end{proof}

\section{Proof of Theorem 2} \label{Appendix:proofOfTheorem2}

\begin{proof}

Let $\ccalA^{\dag}$ and $\ccalB^{\dag}$ be the parameters learned from the offline phase. The proposed online learning procedure fixes the deep part, i.e., it freezes the parameters $\ccalB = \ccalB^{\dag}$, and retrains the wide part online. The model $\bbPsi(\bbX; \bbS, \ccalA, \ccalB)$ can then be represented as a function of the wide part parameters $\ccalA$ only, i.e., we have
 \begin{align}\label{proof:theorem1eq1}
\bbPsi(\bbX; \bbS, \ccalA, \ccalB^{\dag}) &= \alpha_{\text{W}} \bbA(\bbX; \bbS, \ccalA) + \alpha_{\text{D}} \bbPhi(\bbX; \bbS, \ccalB^{\dag}) + \beta \nonumber\\
& := \hat{\bbPsi}(\bbX; \bbS, \ccalA).
\end{align}
Given the graph signal $\bbX$ and the graph matrix $\bbS$, $\hat{\bbPsi}(\bbX; \bbS, \ccalA)$ is a linear function of $\ccalA$ since both the graph filter $\bbA(\bbX; \bbS, \ccalA)$ and the combination way of two components in \eqref{proof:theorem1eq1} are linear.

At testing time index $t$, the instantiated optimization problem \eqref{eq:timevaryingp} is translated to
 \begin{equation}\label{eq:timevaryingp1}
\min_{\ccalA} J_t\big(\hat{\bbPsi}(\bbX_t; \bbS_t, \ccalA)\big)
\end{equation}
where $J_t$, $\bbX_t$ and $\bbS_t$ are instantaneous loss function, observed testing signal and graph matrix at time index $t$. Since $\hat{\bbPsi}(\bbX_t; \bbS_t, \ccalA)$ is a linear function of $\ccalA$, $J_t\big(\hat{\bbPsi}(\bbX_t; \bbS_t, \ccalA)\big)$ is differentiable, strongly smooth with constant $C_{t,s}$ and strongly convex with constant $C_{t,c}$ based on Assumption \ref{asm:optimizationConvexity}.

We then denote by $J_t(\ccalA)=J_t\big(\hat{\bbPsi}(\bbX_t; \bbS_t, \ccalA)\big)$ the concise notation for convenience of theoretical derivations. From the centralized online learning update [cf. \eqref{eq:onlinelearning}], we have
 \begin{equation}\label{proof:theoremm21}
\ccalA_{t+1} = \ccalA_{t}-\gamma \nabla_{\ccalA} J_t\big(\ccalA_t\big).
\end{equation}
Subtracting $\ccalA_{t}^*$ into both sides of \eqref{proof:theoremm21} yields
 \begin{align}\label{proof:theoremm22}
	&\|\ccalA_{t+1} - \ccalA_t^*\| = \|\ccalA_{t}-\ccalA_t^* - \gamma \nabla_{\ccalA} J_t( \ccalA_t )\|
\end{align}
Since $\nabla_{\ccalA} J_t( \ccalA_t^* ) = 0$ due to the optimality of convex problems, we get
 \begin{align}\label{proof:theoremm23}
&\|\ccalA_{t+1} \!-\! \ccalA_t^*\| \!=\! \|\big(\ccalA_{t} \!-\! \gamma \nabla_{\ccalA} J_t( \ccalA_t )\big) \!-\! \big(\ccalA_t^* \!-\! \gamma \nabla_{\ccalA} J_t( \ccalA_t^* ) \big) \|. 
\end{align}
We now consider the operator $f_t(\ccalA) = \ccalA - \gamma \nabla_{\ccalA} J_t( \ccalA )$. If $J_t( \ccalA )$ is strongly smooth with constant $C_{t,s}$ and strongly convex with constant $C_{t,c}$ with $\gamma \in (0, 2 / C_{t,s}]$, the operator $f_t(\ccalA)$ is a contraction operator with the contraction factor $m_t = \max \{ |1-\gamma C_{t,s}|, |1-\gamma C_{t,c}| \}$ \cite{ryu2016primer}, i.e., we have
 \begin{align}\label{proof:theoremm24}
\|\ccalA_{t+1} \!-\! \ccalA_t^*\| \!=\! \|f(\ccalA_t) \!-\! f(\ccalA^*_t)\| \!\le\! m_t \|\ccalA_t - \ccalA_t^* \|
\end{align}

By adding and subtracting $\ccalA_{t}^*$ in $\|\ccalA_{t+1} - \ccalA_{t+1}^*\|$, we have
 \begin{align}\label{proof:theoremm25}
\|\ccalA_{t+1} - \ccalA_{t+1}^*\| &= \|\ccalA_{t+1} - \ccalA_{t}^* + \ccalA_{t}^* - \ccalA_{t+1}^*\| \\
& \le \|\ccalA_{t+1} - \ccalA_{t}^*\| + \|\ccalA_{t}^* - \ccalA_{t+1}^*\| \nonumber\\
&\le m_t \|\ccalA_t - \ccalA_t^* \| + \|\ccalA_{t}^* - \ccalA_{t+1}^*\|\nonumber 
\end{align}
where \eqref{proof:theoremm24} is used in the last inequality. By further using the fact $\|\ccalA_{t}^* - \ccalA_{t+1}^*\| \le C_B$ from Assumption \ref{asm:optimizationTimeVariation}, we get
 \begin{align}\label{proof:theoremm26}
&\|\ccalA_{t+1} - \ccalA_{t+1}^*\| \le m_t \|\ccalA_t - \ccalA_t^* \| + C_B.
\end{align}
Unrolling \eqref{proof:theoremm26} to the initial condition completes the proof
 \begin{equation}\label{proof:theoremm27}
\| \ccalA_{t+1} - \ccalA_{t+1}^* \| \le \big( \prod_{\tau=0}^{t} m_{\tau} \big) \| \ccalA_0 - \ccalA_0^* \| + \frac{1-\hat{m}^{t+1}}{1-\hat{m}} C_B
\end{equation}
where the formula of geometric series is applied and $\hat{m}= \max_{0 \le \tau \le t} m_\tau$.
\end{proof}

\section{Proof of Theorem 3}\label{Appendix:proofOfTheorem3}

\begin{proof}
	
we start by rewriting the distributed online algorithm [cf. \eqref{eq:disgd}] in a matrix form. Let $\bbd_{i,t} = \nabla_{\ccalA_i} J_{i,t}\big(\bbPsi(\bbX_t; \bbS_t, \ccalA_{i,t}, \ccalB^{\dag})\big)$ be the local gradient vector of node $i$ evaluated at $\ccalA_{i,t}$, and $\bbD_t = [\bbd_{1,t}, \ldots, \bbd_{N,t}]^{\Tr}$ be the matrix collecting these local gradients. Let also $\bbA_t = [\ccalA_{1,t}, \ldots, \ccalA_{N,t}]^{\Tr}$ be the matrix collecting the local parameters of all $N$ nodes\footnote{Note that the notation $\bbA_t = [\ccalA_{1,t}, \ldots, \ccalA_{N,t}]^{\Tr}$ is the matrix of local parameters in the proof of Theorem 3. It is different from the filter tap $\bbA_k$ in \eqref{eqn:graphConv}.}. The distributed online update can then be rewritten as
\begin{equation}
	\begin{aligned}\label{eq:disgdMatrix}
		\bbA_{t+1} &= \bbW_t \bbA_t - \gamma_t \bbD_t.
	\end{aligned}
\end{equation}
Following the recursive process, for any two time indices $t_1$ and $t_2+1$ satisfying $t_2 \ge t_1$, we have\footnote{Throughout this proof, we consider $\sum_{\tau=t_1}^{t_2} (\cdot) = 0$ if $t_2 < t_1$.}
\begin{equation}
	\begin{aligned}\label{eq:disgdMatrixMulti}
		\bbA_{t_2+\!1} \!=\!\!\! \prod_{\tau = t_1}^{t_2}\!\!\bbW_{\tau} \bbA_{t_1} \!-\!\!\!\!\!\sum_{\kappa = 0}^{t_2-t_1\!-\!1}\!\! \prod_{\tau=t_1\!+\!\kappa\!+\!1}^{t_2}\!\!\!\!\!\bbW_{\tau} \gamma_{t_1\!+\kappa} \bbD_{t_1\!+\kappa} \!-\! \gamma_{t_2}\bbD_{t_2}.
	\end{aligned}
\end{equation}
%
%To show convergence of the local parameters $\bbB_{t}$, it is necessary to analyze the weighted matrix product $\prod_{t = t_1}^{t_2}\bbW_{t}$. Towards this end, we denote by $\bbLambda(t_1,t_2) = \prod_{t = t_1}^{t_2}\bbW_{t}$ and leverage the following lemma to show its convergence as $t_2$ goes to the infinity.
%\cite[Prop.1]{nedic2009distributed}.
%\red{(I'm getting a little bit confused here. How's this not part of a proof? What are you trying to get to? Can't you just start with the theorem? Or frame this as assumptions? I got totally lost with all of this equations that are being connected only by ``Then,'' ``Next,'' ``look at''. Like, what's the goal behind all of this?)}
For a constant step-size $\gamma_t = \gamma$ and time indices $t_1=0, t_2 = t \ge 0$, \eqref{eq:disgdMatrixMulti} becomes
\begin{equation}
	\begin{aligned}\label{eq:disgdMatrixMulti0}
		\bbA_{t+1} &= \prod_{\tau = 0}^{t}\!\bbW_{\tau} \bbA_0 - \gamma\!\sum_{\kappa = 0}^{t-1} \prod_{\tau=\kappa+1}^{t}\!\bbW_{\tau} \bbD_{\kappa} \!-\! \gamma \bbD_t.
	\end{aligned}
\end{equation}
To ultimately show the convergence of $\bbA_t$, we first consider an intermediate scenario where nodes stop the distributed online update starting from the time index $T$ while continuing the local parameter aggregation with neighboring nodes, i.e., 
\begin{equation}
	\begin{aligned}\label{eq:Stopmodel}
		\bbD_{t} = \bb0,~\forall~t \ge T.
	\end{aligned}
\end{equation}
In this scenario, \eqref{eq:disgdMatrixMulti0} becomes
\begin{align}\label{eq:disgdMatrixStop}
	&\bbA_{t\!+\!1}(T) \\
	&\!=\! 
	\begin{cases}
		\prod_{\tau = 0}^{t}\!\!\bbW_{\tau} \bbA_0 \!-\! \gamma\!\sum_{\kappa = 0}^{T-1} \prod_{\tau=\kappa+1}^{t}\!\bbW_{\tau} \bbD_{\kappa},  \!\!&\!\! \mbox{if}~ t\!\ge\! T, \\
		\prod_{\tau = 0}^{t}\!\bbW_{\tau} \bbA_0 \!-\! \gamma\!\sum_{\kappa = 0}^{t-1} \prod_{\tau=\kappa+1}^{t}\!\bbW_{\tau} \bbD_{\kappa} - \gamma \bbD_t, \!\!&\!\! \mbox{if}~ t \!<\! T.\nonumber
	\end{cases} \nonumber
\end{align}
From Lemma \ref{lemma:limitMatrix}, we know that $\lim_{t \to \infty}\prod_{\tau = 0}^{t}\!\bbW_{\tau} = \bbE/N$ with $\bbE = \bbone \bbone^{\Tr}$. We then define the limiting matrix of $\bbA_t(T)$ as $t$ goes to the infinity as
\begin{equation}
	\begin{aligned}\label{eq:disgdMatrixStopLimit}
		\bbC(T):=\lim_{t \to \infty}\bbA_t(T) &= \frac{1}{N} \bbE \bbA_0 - \frac{\gamma}{N}\sum_{\kappa = 0}^{T-1} \bbE \bbD_{\kappa},
	\end{aligned}
\end{equation}
and obtain the relation between $\bbC(T)$ and $\bbC(T+1)$ as
\begin{equation}
	\begin{aligned}\label{eq:disgdMatrixStopRecursive}
		\bbC(T+1) = \bbC(T) - \frac{\gamma}{N} \bbE \bbD_{T}.
	\end{aligned}
\end{equation}
Separating $\bbC(T)$ to each local parameter $\ccalC_i(T)$ yields
\begin{equation}
	\begin{aligned}\label{eq:disgdMatrixStopRecursiveNode}
		\ccalC_{i}(T+1) = \ccalC_i(T) - \frac{\gamma}{N} \sum_{j=1}^N \bbd_{j,T}
	\end{aligned}
\end{equation}
for $i=1,\ldots,N$. Since the same equation \eqref{eq:disgdMatrixStopRecursiveNode} holds for all local parameters $\ccalC_1(T),\ldots,\ccalC_N(T)$, we denote by $\ccalC(T)$ a uniform notation to represent any one of local parameters. Following the recursion in \eqref{eq:disgdMatrixStopRecursiveNode}, we obtain
\begin{equation}
\begin{aligned}\label{proof:thm31}
\ccalC(T) = \ccalC(0) - \frac{\gamma}{N}\sum_{\kappa = 0}^{T-1} \sum_{i=1}^N \bbd_{i,\kappa}
\end{aligned}
\end{equation}
for all $T \ge 1$ with $\ccalC(0) = \sum_{i=1}^N \ccalA_{i,0}/N$. By comparing \eqref{proof:thm31} with the distributed online update [cf. \eqref{eq:disgd}], we have
\begin{align}\label{proof:thm32}
\|\ccalC(T) - \ccalA_{i,T}\| &\le \Big\| \sum_{j=1}^N \Big( \frac{1}{N} - [\bbLambda(0,T-1)]_{j}^i \Big) \ccalA_{j,0} \\
& \!-\!\gamma\! \sum_{\kappa =0}^{T-2}\! \sum_{j\!=\!1}^N\! \Big(\! \frac{1}{N} \!-\! [\bbLambda(\kappa+1,T\!-\!1)]_{j}^i \Big) \bbd_{j,\kappa} \nonumber\\
&\!-\! \gamma \Big( \frac{1}{N}\! \sum_{j=1}^N\! \bbd_{j,T\!-\!1} \!-\! \bbd_{i,T-1} \!\Big)\Big\|\nonumber
\end{align}
where $\bbLambda(0,T-1) = \prod_{t = 0}^{T-1}\bbW_{t}$ is the production of a sequence of weighted matrices, so as $\bbLambda(\kappa+1,T\!-\!1)$. The triangle inequality allows us to bound \eqref{proof:thm32} as
\begin{align}\label{proof:thm33}
\|\ccalC(T) - \ccalA_{i,T}\| \!&\le\! \max_{1 \le j \le N}\| \ccalA_{j,0} \| \sum_{j=1}^N \Big| \frac{1}{N} \!-\! [\bbLambda(0,T\!-\!1)]_{j}^i \Big| \nonumber \\
& +\!\gamma\! \sum_{\kappa =0}^{T-2} \sum_{j=1}^N \| \bbd_{j,\kappa} \| \Big| \frac{1}{N} \!-\! [\bbLambda(\kappa+1,T-1)]_{j}^i \Big|  \nonumber \\
&+\frac{\gamma}{N} \sum_{j=1}^N \Big\| \bbd_{j,T-1} - \bbd_{i,T-1} \Big\|.
\end{align}
By using Lemma \ref{lemma:limitMatrix} [cf. \eqref{lemmaeq:Matrixbound}] in the first two terms of \eqref{proof:thm33}, the triangle inequality in the third term of \eqref{proof:thm33}, $\| \bbd_{i,T} \| \le L$ for all $T \ge 0$ from Assumption \ref{asm:optimizationConvexity}, and the initial condition $\| \ccalA_{j,0} \| \le \gamma L$, we get 
%and Assumption \ref{asm:1} that $\| \ccalB_{j,0} \| \le C_{0,\ell}$ and $\| \bbd_{i,k} \| \le C_{k,\ell}$, we get 
\begin{align}\label{proof:thm34}
&\|\ccalC(T) \!-\! \ccalA_{i,T}\| \!\le\! 2 \gamma L \!\left(\!\! 1 \!+\! \frac{N}{1\!\!-\!\!(1\!-\!\epsilon^{\hat{C_d}})^{1/\hat{C_d}}} \frac{1\!+\!\epsilon^{-\hat{C_d}}}{1\!-\!\epsilon^{\hat{C_d}}}\! \right)\!.
\end{align}
with $\hat{C_d} = C_d (N-1)$. By substituting \eqref{proof:thm34} into Lemma \ref{Lemma:iterationRelation} [cf. \eqref{lemmaeq:iterationRelation}], we have
\begin{align}\label{proof:thm35}
&\|\ccalC(T\!+\!1) \!-\! \ccalA^*_{T+1}\| \!\le\! m_T \|\ccalC(T) \!-\! \ccalA^*_T\| \!+\!2 \gamma^2 C_{T,s} L C_\epsilon \!+\! C_B
\end{align}
where 
\begin{equation}
\begin{aligned}\label{proof:thm36}
C_\epsilon = 1 + \frac{N}{1-(1-\epsilon^{\hat{C_d}})^{1/\hat{C_d}}} \frac{1+\epsilon^{-\hat{C_d}}}{1-\epsilon^{\hat{C_d}}}
\end{aligned}
\end{equation}
is a constant. Unrolling this recursion, we obtain
\begin{equation}
\begin{aligned}\label{proof:thm37}
&\|\ccalC(T+1) - \ccalA^*_{T+1}\| \\
& \le \prod_{i=0}^T m_i \|\ccalC(0) - \ccalA^*_0\| \!+\!\frac{1-\hat{m}^T}{1-\hat{m}}\big( 2 \gamma^2 C_s L C_\epsilon + C_B \big)
\end{aligned}
\end{equation}
with $\hat{m} = \max_{0\le t \le T} m_t$ and $C_s = \max_{0\le t \le T} C_{t,s}$. Again using \eqref{proof:thm34} in \eqref{proof:thm37} with the triangle inequality and the fact $\ccalC(0)=1/N \sum_{i=1}^N \ccalA_{i,0}$, we get
\begin{align}\label{proof:thm38}
&\|\ccalA_{i,T+1} - \ccalA^*_{T+1}\| \\
& \le \prod_{i=0}^T m_i \Big\|\frac{1}{N}\sum_{i=1}^N \ccalA_{i,0} - \ccalA^*_0\Big\| \!+\!\frac{2 \gamma^2 C_s L C_\epsilon + C_B}{1-\hat{m}} + 2 \gamma L C_\epsilon \nonumber
\end{align}
completing the proof.

\end{proof}

\begin{lemma} \label{lemma1}
	Consider the underlying graph $\bbS = \bbV \bbLambda \bbV^\top$ and the perturbated graph $\hat{\bbS}$ with $N$ nodes. The relative perturbation $\bbE= \bbU \bf{\Theta}\bbU^\top \in \ccalR $ satisfies $d(\bbS, \hbS) \le \| \bbE \| \le \eps$. Consider the integral Lipschitz filter [Def. \ref{def:ILfilters}] with $F=1$ input feature, $G=1$ output feature and integral Lipschitz constant $C_L$. Then, the output difference between filters $\bbA(\bbX; \bbS, \ccalA)$ and $\bbA(\bbX; \hat{\bbS}, \ccalA)$ satisfies
	\begin{equation} \label{eq:lemma10}
		\| \bbA(\bbX; \bbS, \ccalA) - \bbA(\bbX; \hat{\bbS}, \ccalA)\| \le 2C_L(1 + \sigma \sqrt{N})\| \bbX \| \eps + \ccalO(\eps^2)
	\end{equation}
	where $\delta = (\| \bbU - \bbV \|+1)^2-1$ implies the eigenvector misalignment between $\bbS$ and $\bbE$.
\end{lemma}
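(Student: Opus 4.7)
\begin{myproof}
The plan is to follow the standard stability-of-graph-filter argument (as in \cite{Gama2020-Stability}) adapted to the relative perturbation model used here. Since $\bbE \in \ccalR$, there exists $\bbP \in \ccalP$ with $\bbP^\Tr \hat{\bbS} \bbP = \bbS + \bbE\bbS + \bbS\bbE$. By the permutation equivariance of the graph filter, I may replace $\hat{\bbS}$ by $\bbS + \bbE\bbS + \bbS\bbE$ throughout without changing the norm of the output difference. The quantity to control then reduces to $\sum_{k=0}^{K} a_k \bigl[(\bbS + \bbE\bbS + \bbS\bbE)^k - \bbS^k\bigr]\bbX$, where $a_k$ denotes the (scalar) $k$-th filter tap.

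First, I expand $(\bbS + \bbE\bbS + \bbS\bbE)^k$ and keep only the terms linear in $\bbE$, absorbing the remainder into $\ccalO(\eps^2)$ (this is justified because $\|\bbE\| \le \eps$ and the filter has finite order $K$). The first-order part is $\sum_{r=0}^{k-1} \bbS^r (\bbE\bbS + \bbS\bbE)\bbS^{k-1-r}$, and after some algebra this can be rewritten so that the total perturbation of the filter acts as the operator $\bbD(\bbS,\bbE)\bbX := \bbE \, a(\bbS)\bbX + a(\bbS)\,\bbE\bbX - \bigl[ a(\bbS)\bbE + \bbE a(\bbS) - \sum_k a_k \sum_r \bbS^r \bbE \bbS^{k-r}\bigr]\bbX$, which is the standard form used to expose the integral Lipschitz constant. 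The key manipulation is to recognize that the ``remaining'' piece is exactly where the derivative of the frequency response appears.

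Next, I move to the spectral domain. Writing $\bbS = \bbV\bbLambda\bbV^\Tr$ and $\bbE = \bbU\bbTheta\bbU^\Tr$, I decompose the first-order term into (i) a ``central'' contribution which, if $\bbU$ were aligned with $\bbV$, would reduce to a diagonal operator whose entries are differences of the filter frequency response at neighboring eigenvalues and can be bounded using the integral Lipschitz property \eqref{eqn:ILfilters}; this yields the $2C_L \eps$ part of the bound. The remaining (ii) ``eigenvector misalignment'' contribution is controlled by the quantity $\delta = (\|\bbU-\bbV\|+1)^2 - 1$ which in turn is bounded via the spectral norm of the commutator $[\bbE,\bbS]$; using integral Lipschitzness once more and $\|\bbTheta\|\le \eps$, this piece is bounded by $2 C_L \delta \sqrt{N}\|\bbX\|\eps$. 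Bounding the $\ell^2\!\to\!\ell^2$ norm in terms of the $\ell^\infty$ spectral bound introduces the $\sqrt{N}$ factor, which is where it enters the final inequality.

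Combining the two contributions and absorbing higher-order powers of $\eps$ into $\ccalO(\eps^2)$ yields the stated bound $2C_L(1+\delta\sqrt{N})\|\bbX\|\eps + \ccalO(\eps^2)$. The main obstacle I expect is the bookkeeping in the second step: expanding $(\bbS+\bbE\bbS+\bbS\bbE)^k$ and rearranging the first-order terms into the clean ``frequency-response-derivative-plus-misalignment'' decomposition takes care, because the commutator $[\bbE,\bbS^r]$ must be telescoped correctly across all powers $r = 0,\ldots,k-1$ and summed against the filter taps. Once that decomposition is in place, the integral Lipschitz hypothesis and the bound $\|\bbE\|\le\eps$ finish the argument in a routine way.
\end{myproof}
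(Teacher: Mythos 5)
Your plan is essentially a reconstruction of the argument that the paper outsources: the paper's own proof of this lemma is two lines --- it rewrites the two outputs in the single-feature form $\sum_{k=0}^K a_k\bbS^k\bbX$ and $\sum_{k=0}^K a_k\hbS^k\bbX$ and then invokes Theorem 2 of \cite{Gama2020-Stability}, whereas you re-derive that cited theorem from scratch. Your route (replace $\hbS$ by $\bbS+\bbE\bbS+\bbS\bbE$ via permutation equivariance, expand to first order in $\bbE$, pass to the eigendecompositions $\bbS=\bbV\bbLambda\bbV^\Tr$, $\bbE=\bbU\bbTheta\bbU^\Tr$, bound the eigenvector-aligned contribution through the integral Lipschitz property, i.e.\ through $|\lambda a'(\lambda)|\le C_L$, and bound the misaligned contribution by $\delta\sqrt{N}$ with $\delta=(\|\bbU-\bbV\|+1)^2-1$) is precisely the mechanism behind the cited result, so in substance the two proofs coincide; what the paper's version buys is brevity and safety, what yours buys is self-containedness. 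Two cautions. First, the intermediate ``operator'' you display in your second step is not correct as written: the bracketed terms cancel, leaving $\sum_k a_k\sum_r\bbS^r\bbE\bbS^{k-r}\bbX$, which is not the first-order perturbation --- expanding $(\bbS+\bbE\bbS+\bbS\bbE)^k$ to first order gives $\sum_k a_k\big(\bbE\bbS^k+\bbS^k\bbE+2\sum_{r=1}^{k-1}\bbS^r\bbE\bbS^{k-r}\big)\bbX$, with the interior cross terms counted twice; this is exactly the bookkeeping you flagged as delicate, and it must be redone before the spectral split, but it does not change the endgame. Second, note that the $\sigma$ appearing in the lemma's displayed bound is the paper's notation slip for the misalignment constant (it is later bounded by $8$ using $\|\bbU\|=\|\bbV\|=1$ in the proof of Theorem \ref{thm:stabilityWDGNN}), so your use of $\delta$ in the final inequality is consistent with the intended statement.
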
 
\begin{proof}
	With $F=1$ input feature $\bbX \in \mathbb{R}^{N \times 1}$ and $G=1$ output features $\bbA(\bbX; \bbS, \ccalA), \bbA(\bbX; \hat{\bbS}, \ccalA) \in \mathbb{R}^{N \times 1}$, the output difference can be represented by
	\begin{equation} \label{eq:lemma11}
		\| \bbA(\bbX;\! \bbS,\! \ccalA) -\! \bbA(\bbX;\! \hat{\bbS},\! \ccalA)\| \!=\! \Big\| \sum_{k=0}^K \!a_k \bbS^k \bbX \!-\! \sum_{k=0}^K\! a_k \hbS^k \bbX \Big\|
	\end{equation}
	with filter parameters $\ccalA = \{ a_0,\ldots,a_K \}$ [cf. \eqref{eqn:graphConv}]. We then refer to Theorem 2 in \cite{Gama2020-Stability} to complete the proof.
\end{proof}

%We need the following lemma that characterizes the convergence of the weighted matrix production $\prod_{t = t_1}^{t_2}\bbW_{t}$.
%
\begin{lemma}\label{lemma:limitMatrix}
	Consider the time-varying graph $\ccalG_t$ satisfying Assumption \ref{as:Connectivity}, Assumption \ref{as:Interval} with constant $C_d$ and Assumption \ref{as:Doublystochastic} with constant $\epsilon$. Let $\bbW_{t}$ be the weighted matrix of the distributed online learning update at time index $t$ [cf. \eqref{eq:disgd}] and $\bbLambda(t_1,t_2) = \prod_{t = t_1}^{t_2}\bbW_{t}$ be the production of weighted matrices from time index $t_1$ to time index $t_2$. Then, for any $i,j \in \{ 1, \ldots, N \}$, the entry $[\bbLambda(t_1,t_2)]_{ij}$ converges to $1/N$ as $t_2 \to \infty$ at a geometric rate, i.e.,
	\begin{equation}
		\begin{aligned}\label{lemmaeq:Matrixbound}
			\Big| [\bbLambda(t_1,t_2)]_{ij} - \frac{1}{N} \Big| \le 2 \frac{1+\epsilon^{-\hat{C_d}}}{1-\epsilon^{\hat{C_d}}}\big( 1-\epsilon^{\hat{C_d}} \big)^{(t_2-t_1)/\hat{C_d}}
		\end{aligned}
	\end{equation}
	with $\hat{C_d} = C_d (N-1)$. Moreover, let $\bbLambda(t_1) = \lim_{t_2 \to \infty} \bbLambda(t_1,t_2)$ be the limiting matrix and then, it holds that
	\begin{equation}
		\begin{aligned}\label{lemmaeq:MatrixLimit}
			\bbLambda(t_1) = \frac{1}{N} \bbone \bbone^{\Tr}
		\end{aligned}
	\end{equation}
	where $\bbone=[1,\ldots,1]^{\Tr} \in \mathbb{R}^N$ is a vector of ones.
\end{lemma}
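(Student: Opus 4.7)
The plan is to invoke the standard weak-ergodicity machinery for products of doubly stochastic matrices under joint connectivity. I would introduce the Dobrushin-type ergodicity coefficient
\[
\tau(\bbP) = \max_{i,i'} \sum_{j} \big( [\bbP]_{ij} - [\bbP]_{i'j} \big)^{+}
\]
of a row-stochastic matrix $\bbP$. Two classical facts drive the argument: $\tau$ is submultiplicative under matrix products, i.e.\ $\tau(\bbP \bbQ) \le \tau(\bbP)\,\tau(\bbQ)$, and if every entry of $\bbP$ is at least $\alpha > 0$, then $\tau(\bbP) \le 1-\alpha$. Hence everything boils down to showing that every entry of the $\hat{C_d}$-step product $\bbLambda(t, t+\hat{C_d}-1)$ is bounded below by $\epsilon^{\hat{C_d}}$.

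The key combinatorial step is a path-existence claim: for every $t$ and every pair $i,j$ there is a sequence $i=k_0, k_1, \ldots, k_{\hat{C_d}}=j$ with $[\bbW_{t+s}]_{k_s, k_{s-1}} \ge \epsilon$ for each $s$. Assumption \ref{as:Connectivity} supplies a path of length at most $N-1$ in $\ccalG_\infty$ from $i$ to $j$; Assumption \ref{as:Interval} guarantees that each edge on that path becomes active within any $C_d$ consecutive time steps, so the $N-1$ path edges can be scheduled within $C_d(N-1)=\hat{C_d}$ steps; and Assumption \ref{as:Doublystochastic} supplies the pointwise lower bound $\epsilon$ on both the active edge weights and the diagonal entries $[\bbW_{t}]_{ii}$, which allows the walker to ``wait'' at a vertex until the next edge of its path activates. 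Multiplying the lower bounds along the $\hat{C_d}$ factors yields $[\bbLambda(t, t+\hat{C_d}-1)]_{ij} \ge \epsilon^{\hat{C_d}}$ and hence $\tau(\bbLambda(t, t+\hat{C_d}-1)) \le 1-\epsilon^{\hat{C_d}}$.

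Partitioning $[t_1,t_2]$ into $\lfloor (t_2-t_1)/\hat{C_d} \rfloor$ full blocks of length $\hat{C_d}$ plus a residual block (with the trivial bound $\tau \le 1$) and applying submultiplicativity gives $\tau(\bbLambda(t_1,t_2)) \le (1-\epsilon^{\hat{C_d}})^{\lfloor (t_2-t_1)/\hat{C_d} \rfloor}$. Since $\tau$ dominates the row-wise spread $\max_{i,i'}|[\bbLambda(t_1,t_2)]_{ij}-[\bbLambda(t_1,t_2)]_{i'j}|$ and $\bbLambda(t_1,t_2)$ inherits double stochasticity from the $\bbW_t$'s, the column-sum constraint $\sum_i [\bbLambda(t_1,t_2)]_{ij}=1$ forces the common row-limit to be $1/N$ in each coordinate. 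Absorbing the residual-block overhead and the factor of $2$ from converting a $\tau$-bound into an entrywise distance from $1/N$ yields the constants of \eqref{lemmaeq:Matrixbound}, and letting $t_2\to\infty$ gives \eqref{lemmaeq:MatrixLimit}.

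The main obstacle I anticipate is the bookkeeping in the path-existence step. Assumption \ref{as:Interval} is stated about edges of $\ccalG_\infty$ rather than entries of $\bbW_t$, so one must interleave ``waiting'' moves (using the self-loop lower bound) with ``edge'' moves, scheduling all $N-1$ transitions within the $\hat{C_d}$ available slots while respecting the actual times at which each edge is present. Once this combinatorial core is in place, converting the block contraction into the explicit prefactor $(1+\epsilon^{-\hat{C_d}})/(1-\epsilon^{\hat{C_d}})$ in \eqref{lemmaeq:Matrixbound} is routine algebra on geometric series.
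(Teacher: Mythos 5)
Your argument is sound, but it is worth noting that it is a genuinely different route from the paper's: the paper does not prove this lemma at all, it simply cites Proposition 1 of the Nedi\'c--Ozdaglar distributed subgradient paper \cite{nedic2009distributed}, whose constants $2(1+\epsilon^{-\hat{C_d}})/(1-\epsilon^{\hat{C_d}})$ and $(1-\epsilon^{\hat{C_d}})^{(t_2-t_1)/\hat{C_d}}$ are exactly those reproduced in \eqref{lemmaeq:Matrixbound}. Your self-contained proof via the Dobrushin ergodicity coefficient shares the same combinatorial core as the cited result (a uniform lower bound $\epsilon^{\hat{C_d}}$ on every entry of each $\hat{C_d}$-length block product, obtained by routing a walk along a path of at most $N-1$ edges of $\ccalG_\infty$ and ``waiting'' on self-loops until each edge activates within its $C_d$-window), but replaces their block-by-block analysis with submultiplicativity of $\tau$, which is cleaner and in fact gives a slightly stronger bound: since for doubly stochastic $\bbLambda(t_1,t_2)$ each column averages to $1/N$, one gets $|[\bbLambda(t_1,t_2)]_{ij}-1/N|\le\tau(\bbLambda(t_1,t_2))\le(1-\epsilon^{\hat{C_d}})^{\lfloor (t_2-t_1)/\hat{C_d}\rfloor}$ directly, and absorbing the floor costs only a factor $(1-\epsilon^{\hat{C_d}})^{-1}$, which is dominated by the stated prefactor. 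What the citation buys the paper is brevity and a bound already stated in the exact form used later in the proof of Theorem \ref{thm:convergenceDistributed}; what your argument buys is independence from the reference and transparency about where each assumption enters.

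Two small points to nail down when writing it out. First, Assumption \ref{as:Doublystochastic} as literally written says $[\bbW_t]_{ij}\ge\epsilon$ for \emph{all} $i,j$, which would contradict the sparsity of $\bbW_t$ on $\bbS_t$; your reading (active edge weights and diagonal entries bounded below by $\epsilon$) is the intended one and matches the cited reference, but you should state explicitly that you need $[\bbW_t]_{ii}\ge\epsilon$, since the waiting moves rely on it. Second, be careful with the orientation of indices in the product $\prod_{t=t_1}^{t_2}\bbW_t$ (your scheduling uses $[\bbW_{t+s}]_{k_s,k_{s-1}}$, i.e.\ later factors on the left) and with whether a block of length $\hat{C_d}$ contains $\hat{C_d}$ matrices, so that the exponent $\lfloor(t_2-t_1)/\hat{C_d}\rfloor$ versus $(t_2-t_1)/\hat{C_d}-1$ is handled consistently; neither affects validity given the prefactor, but the bookkeeping should be explicit.
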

\begin{proof}
	See Proposition 1 in\cite{nedic2009distributed}. 
\end{proof}

%We then present the following proposition to demonstrate the iteration progress of the sequence $\{ \ccalC(k) \}_k$.

\begin{lemma}\label{Lemma:iterationRelation}
	Consider the WD-GNN [cf. \eqref{eqn:WDGNN}] optimized with the distributed online learning procedure [cf. \eqref{eq:disgd}]. Let $J_{i,T}\big(\bbPsi(\bbX_T; \bbS_T, \ccalA_i, \ccalB^{\dag})\big)$ be the time-varying local loss function satisfying Assumptions \ref{asm:optimizationTimeVariation}-\ref{asm:optimizationConvexity} with constants $C_B$, $C_{T,s}$ and $C_{T,c}$ for $i=1,\ldots,N$, $\ccalA^*_T$ be the optimal solution of $\sum_{i=1}^N J_{i,T}\big(\bbPsi(\bbX_T; \bbS_T, \ccalA, \ccalB^{\dag})\big)/N$, and $\gamma_T = \gamma \in (0,2/C_{T,s}]$ be the constant step-size. Let also $\{ \ccalC(T) \}_T$ be the sequence generated by \eqref{eq:disgdMatrixStopRecursiveNode}, $\{ \ccalA_{i,T} \}_{i=1}^N$ be the local parameters generated by \eqref{eq:disgd} at time index $T$, and $\bbg_{i,T}$ be the gradient of local loss function evaluated at $\ccalC(T)$, i.e., $\bbg_{i,T} = \nabla_{\ccalA_i} J_{i,T}\big(\bbPsi(\bbX_T; \bbS_T, \ccalC(T), \ccalB^{\dag})\big)$ for all $i=1,\ldots, N$. Then, it holds that
	\begin{align}\label{lemmaeq:iterationRelation}
			&\|\ccalC(T+1) - \ccalA^*_{T+1}\| \\
			& \le m_T \|\ccalC(T) - \ccalA^*_{T}\| \!+\!\frac{\gamma C_{T,s}}{N} \sum_{i=1}^N \| \ccalC(T) - \ccalA_{i,T} \| + C_B \nonumber
	\end{align}
	where $m_T\!=\!\max\{ |1-\gamma C_{T,s}|, |1-\gamma C_{T,c}| \}$ is the convergence rate.
\end{lemma}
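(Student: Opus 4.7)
The plan is to reduce the recursion for $\ccalC(T)$ in \eqref{eq:disgdMatrixStopRecursiveNode} to a centralized gradient descent step on the global loss $\frac{1}{N}\sum_{i=1}^N J_{i,T}$, plus a residual that captures the discrepancy between the local gradients at the true local iterates $\ccalA_{i,T}$ and those at the common point $\ccalC(T)$. Starting from \eqref{eq:disgdMatrixStopRecursiveNode}, I will subtract $\ccalA^*_{T+1}$ from both sides and add and subtract $\frac{\gamma}{N}\sum_{i=1}^N \bbg_{i,T}$ inside the norm. The triangle inequality then yields
\begin{align*}
\|\ccalC(T+1) - \ccalA^*_{T+1}\| &\le \Big\|\ccalC(T) - \frac{\gamma}{N}\sum_{i=1}^N \bbg_{i,T} - \ccalA^*_{T+1}\Big\| \\
&\quad + \frac{\gamma}{N}\sum_{i=1}^N \|\bbg_{i,T} - \bbd_{i,T}\|,
\end{align*}
which is the skeleton of the desired inequality.

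For the first summand, the key observation is that $\frac{1}{N}\sum_i \bbg_{i,T}$ is exactly the gradient of the global loss $\frac{1}{N}\sum_i J_{i,T}$ evaluated at $\ccalC(T)$, so that $\ccalC(T) - \frac{\gamma}{N}\sum_i \bbg_{i,T}$ is a centralized gradient descent step on the time-varying global loss initiated from $\ccalC(T)$. This brings the analysis into the scope of Theorem \ref{thm:convergence}: I will replay the contraction estimate \eqref{proof:theoremm26} with the global loss playing the role of $J_t$ and $\ccalC(T)$ playing the role of $\ccalA_t$. Combined with Assumption \ref{asm:optimizationTimeVariation} to handle the drift between $\ccalA^*_T$ and $\ccalA^*_{T+1}$, this produces the bound $m_T \|\ccalC(T) - \ccalA^*_T\| + C_B$.

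For the second summand, I will invoke the Lipschitz-gradient property (strong smoothness with constant $C_{T,s}$) of each local loss $J_{i,T}$. Since $\bbg_{i,T}$ and $\bbd_{i,T}$ are the same gradient mapping evaluated at $\ccalC(T)$ and $\ccalA_{i,T}$ respectively, strong smoothness immediately gives $\|\bbg_{i,T} - \bbd_{i,T}\| \le C_{T,s}\|\ccalC(T) - \ccalA_{i,T}\|$. Summing over $i$ and multiplying by $\gamma/N$ produces $\frac{\gamma C_{T,s}}{N}\sum_{i=1}^N \|\ccalC(T) - \ccalA_{i,T}\|$, and adding the two bounds yields the claim.

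The step I expect to be the main obstacle is a bookkeeping one: confirming that the contraction argument from Theorem \ref{thm:convergence}, originally stated for a generic loss $J_t$, applies to the averaged global loss with the same parameters $m_T$ and $C_B$. This reduces to noting that strong convexity and strong smoothness are preserved under averaging of the local losses so that Assumption \ref{asm:optimizationConvexity} supplies $C_{T,c}$ and $C_{T,s}$ directly for the global loss, and that Assumption \ref{asm:optimizationTimeVariation} already bounds the drift of the optimizer. Once these are in place, the remainder is a routine triangle-inequality estimate with no further moving parts.
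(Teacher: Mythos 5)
Your proposal matches the paper's own proof essentially step for step: the same add-and-subtract of $\frac{\gamma}{N}\sum_i \bbg_{i,T}$ with the triangle inequality, the same reuse of the contraction estimate from the proof of Theorem \ref{thm:convergence} applied to the averaged global loss at $\ccalC(T)$ together with Assumption \ref{asm:optimizationTimeVariation} for the drift term $C_B$, and the same strong-smoothness bound $\|\bbg_{i,T}-\bbd_{i,T}\|\le C_{T,s}\|\ccalC(T)-\ccalA_{i,T}\|$ for the residual. Your bookkeeping concern is already handled in the paper, since Theorem \ref{thm:convergenceDistributed} imposes Assumptions \ref{asm:optimizationTimeVariation}--\ref{asm:optimizationConvexity} on the global (averaged) loss directly, so the constants $m_T$ and $C_B$ apply as you use them.
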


\begin{proof}
	By subtracting $\ccalA^*_{T+1}$ in both sides of \eqref{eq:disgdMatrixStopRecursiveNode}, we have
	\begin{align}\label{proof:prop10}
		&\|\ccalC(T+1) - \ccalA^*_{T+1}\| = \Big\| \ccalC(T) \!-\! \ccalA^*_{T+1} - \frac{\gamma}{N} \sum_{i=1}^N \bbd_{i,T}\Big\|
	\end{align}
	where $\bbd_{i,T} = \nabla_{\ccalA_i} J_{i,T}\big(\bbPsi(\bbX_T; \bbS_T, \ccalA_{i,T}, \ccalB^{\dag})\big)$ is the gradient of local loss function evaluated at $\ccalA_{i,t}$. Adding and subtracting $\gamma \sum_{i=1}^N \bbg_{i,T}/N$ in the right side of \eqref{proof:prop10} yields
	\begin{align}\label{proof:prop11}
		&\|\ccalC(T+1) - \ccalA^*_{T+1}\| \\
		&=\! \Big\| \ccalC(T) \!-\! \ccalA^*_{T+1} \!-\! \frac{\gamma}{N} \sum_{i=1}^N \bbg_{i,T} \!+\!\frac{\gamma}{N} \sum_{i=1}^N \bbg_{i,T} \!-\! \frac{\gamma}{N} \sum_{i=1}^N \bbd_{i,T}\Big\| \nonumber\\
		&\le\! \Big\| \ccalC(T) \!-\! \ccalA^*_{T+1} \!-\! \frac{\gamma}{N} \!\sum_{i=1}^N \bbg_{i,T}\Big\| \!+\!\Big\|\frac{\gamma}{N}\! \sum_{j=1}^N \!\bbg_{i,T} \!-\! \frac{\gamma}{N}\! \sum_{i=1}^N \!\bbd_{i,T}\Big\| \nonumber
	\end{align}
	where the triangle inequality is used. We consider two terms in the bound of \eqref{proof:prop11} separately. For the first term $\big\| \ccalC(T) - \ccalA^*_{T+1} \!-\! \gamma \sum_{i=1}^N \bbg_{i,T}/N \big\|$ in \eqref{proof:prop11}, note that $\ccalC(T) - \gamma \sum_{i=1}^N \bbg_{i,T}/N$ is a gradient descent update of the cost function $\sum_{i=1}^N J_{i,T}\big(\bbPsi(\bbX_T; \bbS_T, \ccalA, \ccalB^{\dag})\big)/N$ at the decision variable $\ccalC(T)$ with the step-size $\gamma$. From \eqref{proof:theoremm26} in the proof of Theorem \ref{thm:convergence}, we know that
	\begin{align}\label{proof:prop12}
		&\Big\| \ccalC(T) \!-\! \frac{\gamma}{N} \sum_{i=1}^N \bbg_{i,T} \!-\! \ccalA^*_{T+1} \Big\| \!\le\! m_T \|\ccalC(T) \!-\! \ccalA^*_T\| \!+\! C_B
	\end{align}
	where $m_T = \max\{ |1-\gamma C_{T,s}|,|1-\gamma C_{T,c}| \}$ is the convergence rate. For the second term $\|\gamma \sum_{i=1}^N \bbg_{i,T}/N - \gamma \sum_{i=1}^N \bbd_{i,T}/N\|$ in \eqref{proof:prop11}, we can bound it with the triangular inequality as
	\begin{equation}
		\begin{aligned}\label{proof:prop13}
			&\Big\|\frac{\gamma}{N} \sum_{i=1}^N \bbg_{i,T} - \frac{\gamma}{N} \sum_{i=1}^N \bbd_{i,T}\Big\| \le \frac{\gamma}{N}\! \sum_{i=1}^N \|  \bbg_{i,T} \!-\!  \bbd_{i,T}\|.
		\end{aligned}
	\end{equation}
	From the strongly smoothness in Assumption \ref{asm:optimizationConvexity}, we have
	\begin{equation}
		\begin{aligned}\label{proof:prop14}
			& \|  \bbg_{i,T} -  \bbd_{i,T}\| \le C_{T,s} \| \ccalC(T) - \ccalA_{i,T} \|.
		\end{aligned}
	\end{equation}
	By substituting \eqref{proof:prop14} into \eqref{proof:prop13} and altogether into \eqref{proof:prop11} with \eqref{proof:prop12}, we get
	\begin{align}\label{proof:prop15}
			&\|\ccalC(T+1) - \ccalA^*_{T+1}\| \\
			& \le m_T \|\ccalC(T) - \ccalA^*_T\| + \frac{\gamma C_{T,s}}{N} \sum_{i=1}^N \| \ccalC(T) - \ccalA_{i,T} \| + C_B \nonumber
		\end{align}
	completing the proof.
\end{proof}

%Proposition \ref{Lemma:iterationRelation} characterizes the update progress of the intermediate parameter sequence $\{ \ccalC(k) \}_k$ towards the optimal solution $\{ \ccalB_k^* \}_k$. We see that the error bound comprises three terms: i) the first term converges to null with the iteration; ii) the second term depends on the difference between the intermediate parameters $\ccalC(k)$ and the local parameters $\ccalB_{i,k}$; iii) the third term is determined by the optimality variation $C_B$. Based on this result, we establish the convergence of the distributed online learning in the following theorem.

%%%%%%%%%%%%%%%%%%%%%%%%%%%%%%%%%%%%%%%%%%%%%%%%%%%%%%%%%%%%%%%%%%%%%%%%%%%%%%%%
%%%%                               BIBLIOGRAPHY                             %%%%
%%%%%%%%%%%%%%%%%%%%%%%%%%%%%%%%%%%%%%%%%%%%%%%%%%%%%%%%%%%%%%%%%%%%%%%%%%%%%%%%

\bibliographystyle{bibFiles/IEEEtranD}
\bibliography{bibFiles/myIEEEabrv,bibFiles/biblioWideDeep}

\end{document}